\theoremstyle{plain}
\newtheorem{theorem}{Theorem}[section]
\newtheorem{lemma}[theorem]{Lemma}
\theoremstyle{definition}
\newtheorem{definition}[theorem]{Definition}
\theoremstyle{remark}
\icmltitlerunning{Neural Continuous-Discrete State Space Models}
\newcommand{\LeftComment}[1]{%
  \Statex \hspace*{\ALG@thistlm}\(\triangleright\) #1}
\def\eqref#1{Eq.~(\ref{#1})}
\def\eqrefp#1{(Eq.~\ref{#1})}
\def\1{\bm{1}}
\def\eps{{\epsilon}}
\def\rva{{\mathbf{a}}}
\def\rvf{{\mathbf{f}}}
\def\rvg{{\mathbf{g}}}
\def\rvm{{\mathbf{m}}}
\def\rvx{{\mathbf{x}}}
\def\rvy{{\mathbf{y}}}
\def\rvz{{\mathbf{z}}}
\def\rmA{{\mathbf{A}}}
\def\rmB{{\mathbf{B}}}
\def\rmC{{\mathbf{C}}}
\def\rmD{{\mathbf{D}}}
\def\rmF{{\mathbf{F}}}
\def\rmG{{\mathbf{G}}}
\def\rmH{{\mathbf{H}}}
\def\rmI{{\mathbf{I}}}
\def\rmJ{{\mathbf{J}}}
\def\rmK{{\mathbf{K}}}
\def\rmP{{\mathbf{P}}}
\def\rmQ{{\mathbf{Q}}}
\def\rmR{{\mathbf{R}}}
\def\rmS{{\mathbf{S}}}
\def\rmU{{\mathbf{U}}}
\def\rmX{{\mathbf{X}}}
\def\rmY{{\mathbf{Y}}}
\def\rmZ{{\mathbf{Z}}}
\DeclareMathAlphabet{\mathsfit}{\encodingdefault}{\sfdefault}{m}{sl}
\SetMathAlphabet{\mathsfit}{bold}{\encodingdefault}{\sfdefault}{bx}{n}
\def\gA{{\mathcal{A}}}
\def\gL{{\mathcal{L}}}
\def\gN{{\mathcal{N}}}
\def\gU{{\mathcal{U}}}
\def\gY{{\mathcal{Y}}}
\newcommand{\E}{\mathbb{E}}
\newcommand{\R}{\mathbb{R}}
\newcommand{\ourmodel}{NCDSSM}
\newcommand{\ctssmlti}{NCDSSM-LTI}
\newcommand{\ctssmll}{NCDSSM-LL}
\newcommand{\ctssmnl}{NCDSSM-NL}
\begin{document}

\twocolumn[
\icmltitle{Neural Continuous-Discrete State Space Models \\ for Irregularly-Sampled Time Series}

\icmlsetsymbol{note}{$\dagger$}

\begin{icmlauthorlist}
\icmlauthor{Abdul Fatir Ansari}{amazon,note}
\icmlauthor{Alvin Heng}{nus}
\icmlauthor{Andre Lim}{nus}
\icmlauthor{Harold Soh}{nus,ssi}
\end{icmlauthorlist}

\icmlaffiliation{nus}{School of Computing, National University of Singapore (NUS)}
\icmlaffiliation{ssi}{Smart Systems Institute, NUS}
\icmlaffiliation{amazon}{AWS AI Labs}

\icmlcorrespondingauthor{Abdul Fatir Ansari}{abdulfatir@u.nus.edu}

\icmlkeywords{Machine Learning, ICML}

\vskip 0.3in
]

\printAffiliationsAndNotice{\textsuperscript{$\dagger$}Work done while at National University of Singapore, prior to joining Amazon.}  %

\begin{abstract}
Learning accurate predictive models of real-world dynamic phenomena (e.g., climate, biological) remains a challenging task. One key issue is that the data generated by both natural and artificial processes often comprise time series that are irregularly sampled and/or contain missing observations. In this work, we propose the Neural Continuous-Discrete State Space Model (\ourmodel) for continuous-time modeling of time series through discrete-time observations. \ourmodel\ employs auxiliary variables to disentangle recognition from dynamics, thus requiring amortized inference only for the auxiliary variables. Leveraging techniques from continuous-discrete filtering theory, we demonstrate how to perform accurate Bayesian inference for the dynamic states. We propose three flexible parameterizations of the latent dynamics and an efficient training objective that marginalizes the dynamic states during inference. Empirical results on multiple benchmark datasets across various domains show improved imputation and forecasting performance of \ourmodel\ over existing models.

\end{abstract}

\vspace{-2.5em}
\section{Introduction}
\vspace{-.5em}
State space models (SSMs) provide an elegant framework for modeling time series data. Combinations of SSMs with neural networks have proven effective for various time series tasks such as segmentation, imputation, and forecasting~\citep{krishnan2015deep,fraccaro2017disentangled,rangapuram2018deep,kurle2020deep,ansari2021deep}. However, most existing models are limited to the discrete time (i.e., uniformly sampled) setting, whereas data from various physical~\citep{menne2010long}, biological~\citep{PhysioNet}, and business~\citep{turkmen2019intermittent} systems in the real world are sometimes only available at irregular intervals. Such systems are best modeled as continuous-time latent processes with irregularly-sampled discrete-time observations. Desirable features of such a time series model include modeling of stochasticity (uncertainty) in the system, and efficient and accurate inference of the system state from potentially high-dimensional observations (e.g., video frames). 

\begin{figure}
    \centering
    \includegraphics[width=0.80\linewidth]{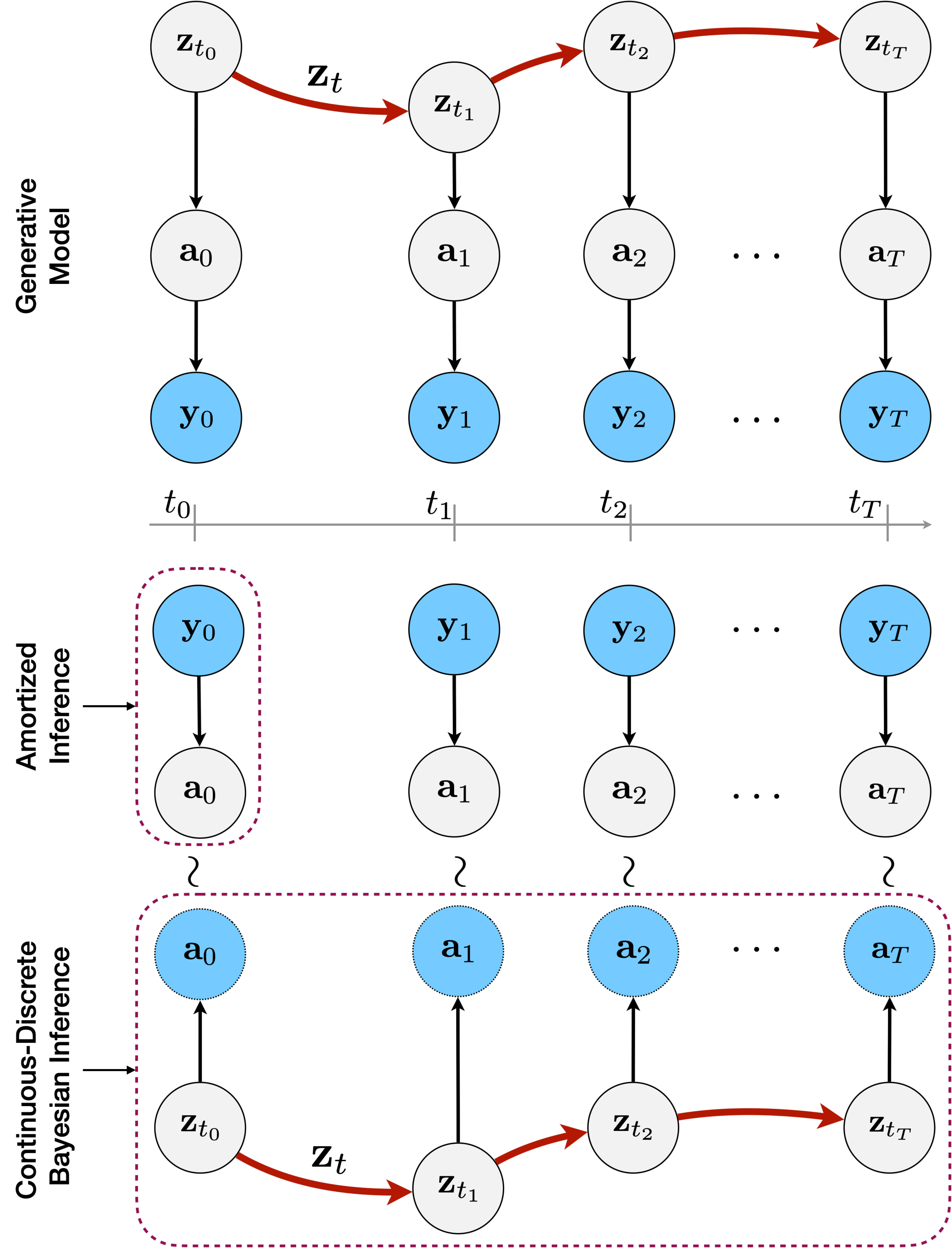}
    \caption{(\textbf{Top}) Generative model of Neural Continuous-Discrete State Space Model. The bold red arrows indicate that the state, $\rvz_t$, evolves continuously in time. The auxiliary variables, $\rva_k$, and observations, $\rvy_k$, are emitted at arbitrary discrete timesteps $t_k \in \{t_0, t_1, \dots, t_T\}$. (\textbf{Bottom}) Amortized inference for auxiliary variables and continuous-discrete Bayesian inference for states. Samples from the amortized variational distribution over auxiliary variables are used as pseudo-observations to condition and perform inference in the continuous-discrete SSM at the bottom.}
    \label{fig:gen-model-and-inference}
    \vspace{-2em}
\end{figure}

Recently, latent variable models based on neural differential equations have gained popularity for continuous-time modeling of time series~\citep{chen2018neural,rubanova2019latent,yildiz2019ode2vae,li2020scalable,liu2020learning,solin2021scalable}. However, these models suffer from limitations. The ordinary differential equation (ODE)-based models employ deterministic latent dynamics and/or encode the entire context window into an initial state, creating a restrictive bottleneck. 
Stochastic differential equation (SDE)-based models use stochastic latent dynamics, but typically perform a variational approximation of the latent trajectories via posterior SDEs. The posterior SDEs incorporate new observations in an ad-hoc manner, potentially resulting in a disparity between the posterior and generative transition dynamics, and a non-Markovian state space.

To address these issues, we propose the Neural Continuous-Discrete State Space Model (\ourmodel) that uses discrete-time observations to model continuous-time stochastic Markovian dynamics (Fig.~\ref{fig:gen-model-and-inference}). By using auxiliary variables, \ourmodel{} disentangles recognition of high-dimensional observations from dynamics (encoded by the state)~\citep{fraccaro2017disentangled,kurle2020deep}. We leverage the rich literature on continuous-discrete filtering theory~\citep{jazwinski1970stochastic}, which has remained relatively underexplored in the modern deep learning context~\citep{schirmer2022modeling}. Our proposed inference algorithm only performs amortized variational inference for the auxiliary variables since they enable classic continuous-discrete Bayesian inference~\cite{jazwinski1970stochastic} for the states, using only the generative model. This obviates the need for posterior SDEs and allows incorporation of new observations via a principled Bayesian update, resulting in accurate state estimation. As a result, \ourmodel\ enables online prediction and naturally provides state uncertainty estimates. We propose three dynamics parameterizations for \ourmodel\ (linear time-invariant, non-linear and locally-linear) and a training objective that can be easily computed during inference.

We evaluated \ourmodel\ on imputation and forecasting tasks on multiple benchmark datasets. Our experiments demonstrate that \ourmodel\ accurately captures the underlying dynamics of the time series and extrapolates it consistently beyond the training context, significantly outperforming baseline models. From a practical perspective, we found that \ourmodel\ is less sensitive to random initializations and requires fewer parameters than the baselines.

In summary, the key contributions of this work are:
\begin{itemize}[noitemsep,nolistsep,topsep=-0.5em]
    \item \ourmodel, a continuous-discrete SSM with auxiliary variables for continuous-time modeling of irregularly-sampled (high dimensional) time series;
    \item An accurate inference algorithm that performs amortized inference for auxiliary variables and classic Bayesian inference for the dynamic states;
    \item An efficient learning algorithm and its stable implementation using square root factors;
    \item Experiments on multiple benchmark datasets, demonstrating that \ourmodel\ learns accurate models of the underlying dynamics and extrapolates it consistently into the future.
\end{itemize}

\vspace{-0.5em}
\section{Approximate Continuous-Discrete Inference}
\label{sec:background}
\vspace{-0.5em}
We begin with a review of approximate continuous-discrete Bayesian filtering and smoothing, inference techniques employed by our proposed model. Consider the following It\^o SDE,
\begin{equation}
    d\rvz_t = \rvf(\rvz_t, t) dt + \rmG(\rvz_t, t)d\rmB_t,\label{eq:non-linear-sde}
\end{equation}
where $\rvz_t \in \R^m$ is the state, $\rmB_t \in \R^m$ denotes a Brownian motion with diffusion matrix $\rmQ$, $\rvf(\cdot, t): \R^m \to \R^m$ is the drift function and $\rmG(\cdot, t): \R^m \to \R^{m \times m}$ is the diffusion function at time $t$. The initial density of the state, $p(\rvz_0)$, is assumed to be known and independent of the Brownian motion, $\rmB_t$. The evolution of the marginal density of the state, $p_t(\rvz_t)$, is governed by the Fokker-Plank-Kolmogorov (FPK) equation~\citep[Ch.~4]{jazwinski1970stochastic},
\begin{align}
    \frac{\partial p_t(\rvz_t)}{\partial t} = \mathscr{L}^\ast p_t,\label{eq:fpk}
\end{align}
where $\mathscr{L}^\ast$ is the forward diffusion operator given by
\begin{equation}
    \mathscr{L}^\ast \varphi = -\sum_{i=1}^d\frac{\partial}{\partial x_i}\left[\varphi f_i\right] 
    + \frac{1}{2}\sum_{i=1}^d\sum_{j=1}^d\left[\varphi(\rmG\rmQ \rmG^\top)_{ij}\right].\nonumber
\end{equation}
In practice, we only have access to noisy transformations (called measurements or observations), $\rva_{t_k} \in \R^d$, of the state, $\rvz_{t_k}$, at discrete timesteps $t_k \in \{t_0, \dots, t_T\}$. In the following, we employ the notation $\rvx_t$ to represent the value of a variable $\rvx$ at an arbitrary continuous time $t$, and $\rvx_{t_k}$ (or $\rvx_k$ for short) to represent its value at the time $t_k$ associated with the $k$-th discrete timestep. The \emph{continuous-discrete state space model}~\citep[Ch.~6]{jazwinski1970stochastic} is an elegant framework for modeling such time series. 

\begin{definition}[Continuous-Discrete State Space Model]
    A continuous-discrete state space model is one where the latent state, $\rvz_t$, follows the continuous-time dynamics governed by \eqref{eq:non-linear-sde} and the measurement, $\rva_k$, at time $t_k$ is obtained from the measurement model $p(\rva_k | \rvz_{t_k})$. 
\end{definition}

In this work, we consider linear Gaussian measurement models,
$
    \rva_k \sim \gN(\rva_k; \rmH\rvz_{t_k}, \rmR),
$
where $\rmH \in \R^{d \times m}$ is the measurement matrix and $\rmR \succeq 0 \in \R^{d \times d}$ is the measurement covariance matrix.
Given observations $\gA_\tau = \{\rva_k: t_k \leq \tau\}$, we are interested in answering two types of inference queries: the posterior distribution of the state, $\rvz_t$, conditioned on observations up to time $t$, $p_t(\rvz_t | \gA_t)$, and the posterior distribution of the state, $\rvz_t$, conditioned on all available observations, $p_t(\rvz_t | \gA_T)$. These are known as the \emph{filtering} and \emph{smoothing} problems, respectively. 

The filtering density, $p_t(\rvz_t | \gA_t)$, satisfies the FPK equation (Eq.~\ref{eq:fpk}) for $t \in [t_k, t_{k+1})$ between observations, with the initial condition $p_t(\rvz_t | \gA_{t_k})$ at time $t_k$. Observations can be incorporated via a Bayesian update,
\begin{equation}
    p_t(\rvz_{t_k} | \gA_{t_k}) = \frac{p(\rva_{t_k} | \rvz_{t_k})p(\rvz_{t_k} | \gA_{t_{k-1}})}{p(\rva_k | \gA_{t_{k-1}})}.\label{eq:bayesian-update}
\end{equation}
The smoothing density satisfies a backward partial differential equation related to the FPK equation. We refer the reader to \citet{anderson1972fixed} and \citet[Ch.~10]{sarkka2019applied} for details and discuss a practical approximate filtering procedure in the following (cf. Appendix~\ref{app:smoothing} for smoothing).
\subsection{Continuous-Discrete Bayesian Filtering} 
Solving \eqref{eq:fpk} for arbitrary $\rvf$ and $\rmG$ is intractable; hence, several approximations have been considered in the literature~\citep[Ch.~9]{sarkka2019applied}. The Gaussian assumed density approximation uses a Gaussian approximation,
\begin{equation}
    p_t(\rvz_t) \approx \gN(\rvz_t; \rvm_t, \rmP_t),\label{eq:assumed-density-approx}
\end{equation}
for the solution to the FPK equation, characterized by the time-varying mean, $\rvm_t$, and covariance matrix, $\rmP_t$. Further, linearization of the drift $\rvf$ via Taylor expansion results in the following ODEs that govern the evolution of the mean and covariance matrix,
\begin{subequations}
\label{eq:linearization-approx}
\begin{align}
    \frac{d\rvm_t}{dt} &= \rvf(\rvm_t, t),\label{eq:mean-predict}\\
    \frac{d\mathbf{P}_t}{dt} &= \rmF_{\rvz}(\rvm_t, t)\rmP_t + \rmP_t\rmF^\top_{\rvz}(\rvm_t, t) + \rmD(\rvm_t, t),\label{eq:cov-predict}
\end{align}
\end{subequations}
where $\rmF_{\rvz}(\rvm_t, t)$ is the Jacobian of $\rvf(\rvz, t)$ with respect to $\rvz$ at $\rvm_t$ and $\rmD(\cdot, t) = \rmG(\cdot, t)\rmQ \rmG^\top(\cdot, t)$. Thus, for $t \in [t_k, t_{k+1})$ between observations, the filter distribution $p_t(\rvz_t | \gA_t)$ can be approximated as a Gaussian with mean and covariance matrix given by solving \eqref{eq:linearization-approx}, with initial conditions $\rvm_{t_k}$ and $\rmP_{t_k}$ at time $t_k$. This is known as the \emph{prediction step}.

The Gaussian assumed density approximation of $p(\rvz_{t_k} | \gA_{t_{k-1}})$ described above makes the Bayesian update in \eqref{eq:bayesian-update} analytically tractable as $p(\rva_{t_k} | \rvz_{t_k})$ is also a Gaussian distribution with mean $\rmH\rvz_{k}$ and covariance matrix $\rmR$. The parameters, $\rvm_{k}$ and $\mathbf{P}_{k}$, of the Gaussian approximation of $p_t(\rvz_{t_k} | \gA_{t_k})$ are then given by,
\begin{subequations}
\label{eq:update-step}
\begin{align}
    \rmS_k &= \mathbf{H}\mathbf{P}_{k}^-\mathbf{H}^\top + \mathbf{R},\\
    \mathbf{K}_k &= \mathbf{P}_{k}^-\mathbf{H}^\top\rmS_k^{-1},\label{eq:kalman-gain}\\
    \rvm_{k} &= \rvm_{k}^- + \mathbf{K}_k\left(\rva_k - \mathbf{H}\rvm_{k}^-\right),\\
    \mathbf{P}_{k} &= \mathbf{P}_{k}^- - \mathbf{K}_k\rmS_k\mathbf{K}^\top_k,
\end{align}
\end{subequations}
where $\rvm_{k}^-$ and $\mathbf{P}_{k}^-$ are the parameters of $p_t(\rvz_{t_k} | \gA_{t_{k-1}})$ given by the prediction step. \eqref{eq:update-step} constitutes the \emph{update step} which is exactly the same as the update step in the Kalman filter for discrete-time linear Gaussian SSMs. The continuous-time prediction step together with the discrete-time update step is sometimes also referred to as the hybrid Kalman filter.
As a byproduct, the update step also provides the conditional likelihood terms,
$
    p(\rva_k | \gA_{t_{k-1}}) = \gN(\rva_k; \mathbf{H}\rvm_{k}^-, \rmS_k),
$
which can be combined to give the likelihood of the observed sequence,
$
    p(\gA_{t_{T}}) = p(\rvy_0)\prod_{k=1}^T p(\rva_k | \gA_{t_{k-1}}).
$
\vspace{-0.5em}
\section{Neural Continuous-Discrete State Space Models}
\vspace{-0.5em}
In this section, we describe our proposed model: Neural Continuous-Discrete State Space Model (\ourmodel). We begin by formulating \ourmodel\ as a continuous-discrete SSM with auxiliary variables that serve as succinct representations of high-dimensional observations. %
We then discuss how to perform efficient inference along with parameter learning and a stable implementation for \ourmodel.

\subsection{Model Formulation}
\label{sec:model-formulation}

\ourmodel\ is a continuous-discrete SSM in which the latent state, $\rvz_t \in \R^m$, evolves in continuous time, emitting linear-Gaussian auxiliary variables, $\rva_t \in \R^h$, which in turn emit observations, $\rvy_t \in \R^d$. Thus, \ourmodel\ possesses two types of latent variables: (a) the states that encode the hidden dynamics, and (b) the auxiliary variables that can be viewed as succinct representations of the observations and are equivalent to observations in the continuous-discrete state space models considered in Section~\ref{sec:background}. The inclusion of auxiliary variables offers two benefits; (i) it allows disentangling representation learning (or recognition) from dynamics (encoded by $\rvz_t$) and (ii) it enables the use of arbitrary decoders to model the conditional distribution $p(\rvy_t | \rva_t)$. We discuss this further in Section \ref{sec:inference}.

Consider the case when we have observations available at discrete timesteps $t_0, \dots, t_T$. Following the graphical model in Fig.~\ref{fig:gen-model-and-inference}, the joint distribution over the states $\rvz_{0:T}$, the auxiliary variables $\rva_{0:T}$, and the observations $\rvy_{0:T}$ factorises as
\begin{align*}
 p_{\theta}(\rvz_{0:T}, &\rva_{0:T}, \rvy_{0:T}) = \prod_{k=0}^T p(\rvy_{k} | \rva_{k})p(\rva_{k} | \rvz_{k})p(\rvz_{k} | \rvz_{{k-1}}),
\end{align*}
where $\rvx_{0:T}$ denotes the set $\{\rvx_{t_0}, \dots, \rvx_{t_T}\}$ and $p(\rvz_{0} | \rvz_{{-1}}) = p(\rvz_{0})$. We model the initial (prior) distribution of the states as a multivariate Gaussian distribution,
\begin{equation}
    p(\rvz_{0}) = \gN(\rvz_0; \bm{\mu}_0, \bm{\Sigma}_0),
\end{equation}
where $\bm{\mu}_0 \in \R^m$ and $\bm{\Sigma}_0 \succeq 0 \in \R^{m \times m}$ are the mean and covariance matrix, respectively. The transition distribution of the states, $p(\rvz_{k} | \rvz_{{k-1}})$, follows the dynamics governed by the SDE in \eqref{eq:non-linear-sde}. The conditional emission distributions of the auxiliary variables and observations are modeled as multivariate Gaussian distributions given by,
\begin{align}
    p(\rva_{k} | \rvz_{k}) &= \gN(\rva_{k}; \mathbf{H}\rvz_{k}, \rmR),\label{eq:auxiliary-emission}\\
    p(\rvy_{k} | \rva_{k}) &= \gN(\rvy_{k}; f^\mu(\rva_{k}), f^\Sigma(\rva_{k})),\label{eq:observation-emission}
\end{align}
where $\rmH~\in~\R^{h \times m}$ is the auxiliary measurement matrix, $\rmR~\succeq~0~\in~\R^{h \times h}$ is the auxiliary covariance matrix, and $f^\mu$ and $f^\Sigma$ are functions parameterized by neural networks that output the mean and the covariance matrix of the distribution, respectively. We use $\theta$ to denote the parameters of the generative model, including SSM parameters $\{\bm{\mu}_0, \bm{\Sigma}_0, \rvf, \rmQ, \rmG, \rmH, \rmR\}$ and observation emission distribution parameters $\{f^\mu, f^\Sigma\}$.

We propose three variants of \ourmodel, depending on the parameterization of $\rvf$ and $\rmG$ functions in \eqref{eq:non-linear-sde} that govern the dynamics of the state:

\textbf{Linear time-invariant dynamics}~is obtained by parameterizing $\rvf$ and $\rmG$ as
\begin{equation}
    \rvf(\rvz_t, t) = \rmF\rvz_t \quad \text{and} \quad \rmG(\rvz, t) = \rmI,\label{eq:lti-dynamics}
\end{equation}
respectively, where $\rmF \in \R^{m \times m}$ is a Markov transition matrix and $\rmI$ is the $m$-dimensional identity matrix. In this case, Eqs.\ (\ref{eq:assumed-density-approx}) and (\ref{eq:linearization-approx}) become exact and the ODEs in \eqref{eq:linearization-approx} can be solved analytically using matrix exponentials (cf.~Appendix~\ref{app:algorithms}). Unfortunately, the restriction of linear dynamics is limiting for practical applications. We denote this linear time-invariant variant as \ctssmlti.

\textbf{Non-linear dynamics}~is obtained by parameterizing $\rvf$ and $\rmG$ using neural networks. With sufficiently powerful neural networks, this parameterization is flexible enough to model arbitrary non-linear dynamics. However, the neural networks need to be carefully regularized (cf. Appendix~\ref{app:stable-implementation}) to ensure optimization and inference stability. Inference in this variant also requires computation of the Jacobian of a neural network for solving \eqref{eq:linearization-approx}. We denote this non-linear variant as \ctssmnl.

\textbf{Locally-linear dynamics}~is obtained by parameterizing $\rvf$ and $\rmG$ as
\begin{equation}
    \rvf(\rvz_t, t) = \rmF(\rvz_t)\rvz_t \quad \text{and} \quad \rmG(\rvz, t) = \rmI,\label{eq:locally-linear-dynamics}
\end{equation}
respectively, where the matrix $\rmF(\rvz_t) \in \R^{m \times m}$ is given by a convex combination of $K$ base matrices $\{\rmF^{(j)}\}_{j=1}^K$,
\begin{equation}
    \rmF(\rvz_t) = \sum_{j=1}^K \alpha^{(j)}(\rvz_t)\rmF^{(j)},\label{eq:locally-linear}
\end{equation}
and the combination weights, $\alpha(\rvz_t)$, are given by
\begin{equation}
    \alpha(\rvz_t) = \mathrm{softmax}(g(\rvz_t)),\label{eq:locally-linear-mixture-weights}
\end{equation}
where $g$ is a neural network. Such parameterizations smoothly interpolate between linear SSMs and can be viewed as ``soft'' switching SSMs. Locally-linear dynamics has previously been used for discrete-time SSMs~\citep{karl2016deep,klushyn2021latent}; we extend it to the continuous time setting by evaluating \eqref{eq:locally-linear} continuously in time. Unlike non-linear dynamics, this parameterization does not require careful regularization and its flexibility can be controlled by choosing the number of base matrices, $K$. Furthermore, the Jacobian of $\rvf$ in \eqref{eq:linearization-approx} can be approximated as $\rmF(\rvm_t)$, avoiding the expensive computation of the Jacobian of a neural network~\citep{klushyn2021latent}. We denote this locally-linear variant as \ctssmll.

\subsection{Inference}
\label{sec:inference}
Exact inference in the model described above is intractable when the dynamics is non-linear and/or the observation emission distribution, $p(\rvy_{k} | \rva_{k})$, is modeled by arbitrary non-linear functions. In the modern deep learning context, a straightforward approach would be to approximate the posterior distribution over the states and auxiliary variables, $q(\rvz_{0:T}, \rva_{0:T} | \rvy_{0:T})$, using recurrent neural networks (e.g., using ODE-RNNs when modeling in continuous time). However, such parameterizations have been shown to lead to poor optimization of the transition model in discrete-time SSMs, leading to inaccurate learning of system dynamics~\citep{klushyn2021latent}. Alternatively, directly applying continuous-discrete inference techniques to non-linear emission models requires computation of Jacobian matrices and inverses of $d \times d$ matrices (cf. Eq.~\ref{eq:update-step}) which scales poorly with the data dimensionality. 

The introduction of linear-Gaussian auxiliary variables offers a middle ground between the two options above. It allows efficient use of continuous-discrete Bayesian inference techniques for the inference of states, avoiding fully amortized inference for auxiliary variables and states. Concretely, we split our inference procedure into two inference steps: (i) for auxiliary variables and (ii) for states.
\paragraph{Inference for auxiliary variables.} We perform amortized inference for the auxiliary variables, factorizing the variational distribution as,
\begin{equation}
    q_\phi(\rva_{0:T} | \rvy_{0:T}) = \prod_{k=0}^T q(\rva_{k} | \rvy_{k}),\label{eq:auxiliary-inference}
\end{equation}
where $q(\rva_{k} | \rvy_{k}) = \gN(\rva_{k}; f_{\phi}^\mu(\rvy_{k}), f_{\phi}^\Sigma(\rvy_{k}))$ and $f_{\phi}^\mu$, $f_{\phi}^\Sigma$ are neural networks. This can be viewed as the recognition network in a variational autoencoder, per timestep. This flexible factorization permits use of arbitrary recognition networks, thereby allowing arbitrary non-linear emission distributions, $p(\rvy_{k} | \rva_{k})$. 

\paragraph{Inference for states.} Given the variational distribution $q_\phi(\rva_{0:T} | \rvy_{0:T})$ in \eqref{eq:auxiliary-inference}, we can draw samples, $\tilde{\rva}_{0:T} \sim q_\phi(\rva_{0:T} | \rvy_{0:T})$, from it. Viewing $\tilde{\rva}_{0:T}$ as pseudo-observations, we treat the remaining SSM (i.e., the states and auxiliary variables) separately. Specifically, conditioned on the auxiliary variables, $\tilde{\gA}_\tau = \{\tilde{\rva}_k: t_k \leq \tau\}$, we can answer inference queries over the states $\rvz_t$ in continuous time. This does not require additional inference networks and can be performed only using the generative model via classic continuous-discrete Bayesian inference techniques in  Section~\ref{sec:background}. To infer the filtered density, $p_t(\rvz_t | \tilde{\gA}_t)$, we can use \eqref{eq:linearization-approx} for the prediction step and \eqref{eq:update-step} for the update step, replacing $\rvy_k$ by $\tilde{\rva}_k$. Similarly, we can use \eqref{eq:smooth-linearization-approx} (Appendix) to infer the smoothed density, $p_t(\rvz_t | \tilde{\gA}_T)$.

As the inference of states is now conditioned on auxiliary variables, only the inversion of $h \times h$ matrices is required which is computationally feasible as $\rva_k$ generally has lower dimensionality than $\rvy_k$. Notably, this inference scheme does not require posterior SDEs for inference (as in other SDE-based models; cf. Section~\ref{sec:related-work})
and does not suffer from poor optimization of the transition model as we employ the (generative) transition model for the inference of states.

\subsection{Learning}
\label{sec:learning}

The parameters of the generative model $\{\theta\}$ and the inference network $\{\phi\}$ can be jointly optimized by maximizing the following evidence lower bound (ELBO) of the log-likelihood, $\log p_{\theta}(\rvy_{0:T})$,
\begin{align}
    &\log p_{\theta}(\rvy_{0:T})\nonumber\\
    &\:\geq \E_{q_\phi(\rva_{0:T} | \rvy_{0:T})}\left[\log \frac{\prod_{k=0}^T p_\theta(\rvy_{k} | \rva_{k})p_\theta(\rva_{0:T})}{\prod_{k=0}^T q_\phi(\rva_{k} | \rvy_{k})}\right]\nonumber\\
    &\: =: \gL_{\mathrm{ELBO}}(\theta,\phi).
\end{align}

The distributions $p_\theta(\rvy_{k} | \rva_{k})$ and $q_\phi(\rva_{k} | \rvy_{k})$ in $\gL_{\mathrm{ELBO}}$ are immediately available via the emission and recognition networks, respectively. What remains is the computation of $p_\theta(\rva_{0:T})$. Fortunately, $p_\theta(\rva_{0:T})$ can be computed as a byproduct of the inference (filtering) procedure described in Section~\ref{sec:inference}. The distribution factorizes as
$$
    p(\rva_{0:T}) = p(\rva_0)\prod_{k=1}^T p(\rva_k | \gA_{t_{k-1}}),
$$
where $p(\rva_k | \gA_{t_{k-1}}) = \gN(\rva_k; \mathbf{H}\rvm_{k}^-, \rmS_k)$, and $\rvm_{k}^-$ and $\rmS_k$ are computed during the prediction and update steps, respectively. The $p_\theta(\rva_{0:T})$ term can be viewed as a ``prior'' over the auxiliary variables. However, unlike the fixed standard Gaussian prior in a vanilla variational autoencoder, $p_\theta(\rva_{0:T})$ is a learned prior given by the marginalization of the states, $\rvz_t$, from the underlying SSM. Algorithm \ref{alg:learning} summarizes the learning algorithm for a single time series; in practice, mini-batches of time series are sampled from the dataset.

\begin{algorithm}
\caption{Learning in Neural Continuous-Discrete State Space Models}\label{alg:learning}
\begin{algorithmic}[1]
\Require{Observations $\{(\rvy_k, t_k)\}_{k=0}^{T}$ and model parameters $\{\theta, \phi\}$.}
\Repeat
    \State{Compute $q_\phi(\rva_{0:T} | \rvy_{0:T})$ using \eqref{eq:auxiliary-inference}.}
    \State{Sample $\tilde{\rva}_{0:T} \sim q_\phi(\rva_{0:T} | \rvy_{0:T})$.}
    \State{$\underline{\hspace{1em}}, \log p_\theta(\tilde{\rva}_{0:T}) \gets$  \Call{Filter}{$\tilde{\rva}_{0:T}$, $t_{0:T}$; $\theta$}}
    \LeftComment{cf. Algorithm~\ref{alg:filtering} (Appendix) for \textsc{Filter}.}
    \State{Compute $\prod_{k=0}^T p_\theta(\rvy_k|\tilde{\rva}_k)$ using \eqref{eq:observation-emission}.}
    \State{Optimize $\gL_{\mathrm{ELBO}}(\theta,\phi)$.}
\Until{end of training.}
\end{algorithmic}
\end{algorithm}

\subsection{Stable Implementation}
\label{sec:stable-implementation}

A naive implementation of the numerical integration of ODEs (Eqs.~\ref{eq:linearization-approx} and \ref{eq:smooth-linearization-approx}) and other operations \eqrefp{eq:update-step} results in unstable training and crashing due to violation of the positive definite constraint for the covariance matrices. Commonly employed tricks such as symmetrization,
$
\rmP = (\rmP + \rmP^\top)/2,
$
and addition of a small positive number ($\epsilon$) to the diagonal elements,
$
\rmP = \rmP + \epsilon\rmI,
$
did not solve these training issues. Therefore, we implemented our algorithms in terms of square root (Cholesky) factors, which proved critical to the stable training of \ourmodel. Several square root factors' based inference algorithms have been previously proposed~\citep[Ch.~12]{zonov2019kalman,jorgensen2007computationally,kailath2000linear}. In the following, we discuss our implementation which is based on \citet{zonov2019kalman}. Further discussion on implementation stability, particularly in the case of non-linear dynamics, can be found in Appendix~\ref{app:stable-implementation}.

We begin with a lemma that shows that the square root factor of the sum of two matrices with square root factors can be computed using $\mathrm{QR}$ decomposition. 
\begin{restatable}{lemma}{sumofsqrts}
\label{lemma:sum-sqrt-factors}
Let $\rmA$ and $\rmB$ be two $n \times n$ matrices with square root factors $\rmA^{1/2}$ and $\rmB^{1/2}$, respectively. The matrix $\rmC = \rmA + \rmB$ also has a square root factor, $\rmC^{1/2}$, given by
\begin{equation*}
    \Theta, \begin{bmatrix}\rmC^{1/2} & \mathbf{0}_{n \times n}\end{bmatrix}^\top = \mathrm{QR}\left(\begin{bmatrix}\rmA^{1/2} & \rmB^{1/2}\end{bmatrix}^\top\right),
\end{equation*}
where $\Theta$ is the orthogonal $\mathrm{Q}$ matrix given by $\mathrm{QR}$ decomposition and $\mathbf{0}_{n \times n}$ is an $n \times n$ matrix of zeros.
\end{restatable}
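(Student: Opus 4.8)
The plan is to read the claimed identity as a statement about Cholesky (square root) factors and Gram matrices, and to verify it by squaring both sides. Recall the convention implicit in the lemma's notation: for a symmetric positive semidefinite matrix $\rmM$, a ``square root factor'' $\rmM^{1/2}$ is any matrix with $\rmM = \rmM^{1/2}(\rmM^{1/2})^\top$ (it need not be symmetric; it can be, e.g., a lower-triangular Cholesky factor). First I would stack the two given factors side by side, forming the $n \times 2n$ matrix $\rmM := \begin{bmatrix}\rmA^{1/2} & \rmB^{1/2}\end{bmatrix}$, and observe the basic Gram identity
\begin{equation*}
    \rmM \rmM^\top = \rmA^{1/2}(\rmA^{1/2})^\top + \rmB^{1/2}(\rmB^{1/2})^\top = \rmA + \rmB = \rmC.
\end{equation*}
So any matrix $\rmL$ with $\rmM^\top = \rmQ \begin{bmatrix}\rmL & \mathbf{0}\end{bmatrix}^\top$ for orthogonal $\rmQ$ will be a valid square root factor of $\rmC$; the QR decomposition is just one canonical way to produce such an $\rmL$.

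The key steps, in order: (1) Apply the (thin/full) QR decomposition to the $2n \times n$ matrix $\rmM^\top$, writing $\rmM^\top = \Theta \rmU$ where $\Theta \in \R^{2n \times 2n}$ is orthogonal ($\Theta^\top \Theta = \rmI_{2n}$) and $\rmU \in \R^{2n \times n}$ is upper triangular, hence of the block form $\rmU = \begin{bmatrix}\rmR_0 \\ \mathbf{0}_{n\times n}\end{bmatrix}$ with $\rmR_0 \in \R^{n\times n}$ upper triangular; set $\rmC^{1/2} := \rmR_0^\top$. (2) Transpose to get $\rmM = \rmU^\top \Theta^\top = \begin{bmatrix}\rmC^{1/2} & \mathbf{0}_{n\times n}\end{bmatrix}\Theta^\top$. (3) Compute $\rmM\rmM^\top = \begin{bmatrix}\rmC^{1/2} & \mathbf{0}\end{bmatrix}\Theta^\top \Theta \begin{bmatrix}\rmC^{1/2} & \mathbf{0}\end{bmatrix}^\top = \rmC^{1/2}(\rmC^{1/2})^\top + \mathbf{0} = \rmC^{1/2}(\rmC^{1/2})^\top$, using orthogonality of $\Theta$ to cancel $\Theta^\top\Theta = \rmI_{2n}$. (4) Combine with the Gram identity from the first paragraph to conclude $\rmC^{1/2}(\rmC^{1/2})^\top = \rmC$, i.e.\ $\rmR_0^\top$ is indeed a square root factor of $\rmC = \rmA+\rmB$, which is exactly the asserted identity (with $\Theta$ the orthogonal factor and the zero block accounting for the bottom $n$ rows).

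I do not expect a serious obstacle here — the whole argument is ``square both sides and cancel the orthogonal matrix.'' The only points requiring a little care are bookkeeping rather than mathematics: being explicit that a ``square root factor'' means $\rmM = \rmM^{1/2}(\rmM^{1/2})^\top$ rather than a symmetric square root, so that non-symmetric triangular factors are allowed; matching the block dimensions so the $\mathbf{0}_{n\times n}$ padding is correctly placed (the QR is of a $2n \times n$ matrix, giving $n$ nontrivial rows and $n$ zero rows in the triangular factor); and noting that existence of $\rmC^{1/2}$ also follows abstractly because $\rmC = \rmA + \rmB$ is a sum of PSD matrices and hence PSD. I would state the proof for general (not necessarily invertible) $\rmA^{1/2}, \rmB^{1/2}$, since the application in the filtering equations needs it for possibly rank-deficient covariance factors, and this generality costs nothing in the argument.
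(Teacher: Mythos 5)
Your proof is correct and follows essentially the same route as the paper's: both rest on the Gram identity $\begin{bmatrix}\rmA^{1/2} & \rmB^{1/2}\end{bmatrix}\begin{bmatrix}\rmA^{1/2} & \rmB^{1/2}\end{bmatrix}^\top = \rmC$ and the invariance of this product under right-multiplication by an orthogonal matrix, identifying the triangular factor of the QR decomposition as the new square root factor. The only cosmetic difference is direction of presentation — you start from the QR decomposition and verify the factor property, while the paper posits the orthogonal transformation bringing the stacked factor to block-triangular form and then recognizes it as a QR decomposition — but the content is identical.
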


\paragraph{Prediction step.} The solution of matrix differential equations of the form in \eqref{eq:cov-predict} --- called Lyapunov differential equations --- over $[t_0, t_1]$ is given by~\citep[Corollary~1.1.6]{abou2012matrix}
\begin{equation}
    \rmP_{t_1} = \bm{\Phi}_{t_1}\rmP_{t_0}\bm{\Phi}_{t_1}^\top + \int_{t_0}^{t_1}\bm{\Phi}_{t}\rmD_t\bm{\Phi}_{t}^\top dt,\label{eq:lyapunov-soln}
\end{equation}
where $\bm{\Phi}_t$, called the fundamental matrix, is defined by
\begin{equation}
    \frac{d\bm{\Phi}_{t}}{dt} = \rmF_{\rvz}(\rvm_t, t)\bm{\Phi}_{t} \:\: \text{and} \:\: \bm{\Phi}_{t_0} = \rmI.\label{eq:fundamental-ode}
\end{equation}
This initial value problem can be solved using an off-the-shelf ODE solver. Let $\{\tilde{\bm{\Phi}}_{1} = \rmI, \tilde{\bm{\Phi}}_{2}, \dots, \tilde{\bm{\Phi}}_{n}\}$ be intermediate solutions of \eqref{eq:fundamental-ode} given by an ODE solver with step size $\eta$, \eqref{eq:lyapunov-soln} can be approximated as
\begin{align}
    \rmP_{t_1} &\approx \tilde{\bm{\Phi}}_{n}\rmP_{t_0}\tilde{\bm{\Phi}}_{n}^\top\nonumber\\
    &+ \frac{\eta}{2}\left(\tilde{\bm{\Phi}}_{1}\rmD_{1}\tilde{\bm{\Phi}}_{1}^\top + 2\tilde{\bm{\Phi}}_{2}\rmD_{2}\tilde{\bm{\Phi}}_{2}^\top + \dots + \tilde{\bm{\Phi}}_{n}\rmD_{n}\tilde{\bm{\Phi}}_{n}^\top\right).\label{eq:cov-predict-approx}
\end{align}
The additions in \eqref{eq:cov-predict-approx} are performed using Lemma \ref{lemma:sum-sqrt-factors} with square root factors $\tilde{\bm{\Phi}}_{n}\rmP_{t_0}^{1/2}$ and $\{\tilde{\bm{\Phi}}_{j}\rmD_{j}^{1/2}\}_{j=1}^{n}$.  

\paragraph{Update step.} Using similar arguments as in the proof of Lemma \ref{lemma:sum-sqrt-factors} (cf. Appendix~\ref{app:stable-implementation} for details), the update step \eqrefp{eq:update-step} can be performed by the $\mathrm{QR}$ decomposition of the square root factor
\begin{equation}
    \begin{bmatrix}
    \rmR^{1/2} & \rmH(\rmP_k^{-})^{1/2}\\
    \mathbf{0}_{m \times d} & (\rmP_k^{-})^{1/2}
    \end{bmatrix}^\top.\label{eq:update-sqrt-factor}
\end{equation}
Let
$
    \begin{bmatrix}
    \rmX & \mathbf{0}\\
    \rmY & \rmZ
    \end{bmatrix}^\top
$
be the upper triangular $\mathrm{R}$ matrix obtained from the $\mathrm{QR}$ decomposition of (\ref{eq:update-sqrt-factor}). The square root factor of the updated covariance matrix, $\rmP_k^{1/2}$, and the Kalman gain matrix, $\rmK_k$, are then given by $\rmP_k^{1/2} = \rmZ$ and $\rmK_k = \rmY\rmX^{-1}$, respectively.
\vspace{-0.5em}
\section{Related Work}
\label{sec:related-work}
\paragraph{ODE-based models.} Since the introduction of the NeuralODE~\citep{chen2018neural}, various models based on neural ODEs have been proposed for continuous-time modeling of time series. LatentODE~\citep{rubanova2019latent} encodes the entire context window into an initial state using an encoder (e.g., ODE-RNN) and uses a NeuralODE to model the latent dynamics. ODE2VAE~\citep{yildiz2019ode2vae} decomposes the latent state into position and velocity components to explicitly model the acceleration and parameterize the ODE dynamics with Bayesian neural networks, thus accommodating uncertainty. Nevertheless, both models lack a mechanism to update the latent state based on new observations. To address this limitation, NeuralCDE~\citep{kidger2020neural} incorporates techniques from rough path theory to control the latent state using observations. Conversely, GRU-ODE-B~\citep{de2019gru} and NJ-ODE~\citep{herrera2020neural} combine neural ODEs with a Bayesian-inspired update step, enabling the incorporation of new observations. Unlike prior models, \ourmodel\ incorporates observations via a principled Bayesian update and disentangles recognition from dynamics using auxiliary variables.
\vspace{-1em}
\paragraph{SDE-based models.} LatentSDE~\citep{li2020scalable} uses a posterior SDE in the latent space to infer the latent dynamics together with a prior (generative) SDE in a variational setup. \citet{solin2021scalable} proposed a variant of LatentSDE trained by exploiting the Gaussian assumed density approximation of the non-linear SDE. VSDN~\citep{liu2020learning} uses ODE-RNNs to provide historical information about the time series to the SDE drift and diffusion functions. These models rely on posterior SDEs to infer the dynamics, and new observations are incorporated in an ad-hoc manner. This approach can potentially lead to discrepancies between the posterior and generative dynamics, as well as non-Markovian state spaces. In contrast, \ourmodel\ employs stochastic Markovian dynamics, incorporates observations through principled Bayesian updates, and performs continuous-discrete Bayesian inference for the state variables (dynamics), eliminating the need for posterior SDEs.
\vspace{-1em}
\paragraph{State space models.} Several prior works~\citep{chung2015recurrent,krishnan2015deep,karl2016deep,krishnan2017structured,doerr2018probabilistic} have proposed SSM-like models for discrete-time sequential data, trained via amortized variational inference. Unlike \ourmodel, these models approximate sequential Bayesian inference (i.e., filtering and smoothing) via deterministic RNNs and are limited to the discrete time setting. More recently, deterministic linear SSMs~\citep{gu2021efficiently,zhang2023effectively}, featuring specific transition matrices~\cite{gu2020hippo}, have been introduced as components for sequence modeling. In contrast, our work proposes a general probabilistic continuous-discrete SSM that supports locally-linear and non-linear dynamics.

The combination of Bayesian inference for a subset of latent variables and amortized inference for others has been previously explored in SSMs.
SNLDS~\citep{dong2020collapsed} and REDSDS~\citep{ansari2021deep} perform amortized inference for the states and exact inference for discrete random variables (switches and duration counts) in switching SSMs. 
KVAE~\citep{fraccaro2017disentangled}, EKVAE~\citep{klushyn2021latent} and ARSGLS~\citep{kurle2020deep} introduce auxiliary variables and perform classic Bayesian filtering and smoothing for the state variables, similar to \ourmodel. However, these models utilize specific parameterizations of state dynamics and operate on discrete-time sequential data. On the other hand, our proposed framework presents a general approach for continuous-time modeling of irregularly-sampled time series, allowing for multiple possible parameterizations of the dynamics.

The continuous recurrent unit (CRU)~\cite{schirmer2022modeling} is the most closely related model to \ourmodel, as it also employs continuous-discrete inference. However, there are notable distinctions between CRU and our work: (i) we propose a general framework offering multiple possible parameterizations of dynamics, while CRU focuses on specific locally-linear dynamics that are time-invariant between observed timesteps, unlike \ctssmll, (ii) \ourmodel\ serves as an \emph{unconditional} generative model, which fundamentally differs from CRU's conditional training for downstream tasks, and (iii) \ourmodel\ utilizes continuous-time smoothing for imputation by incorporating future information through the backward (smoothing) pass, whereas CRU solely relies on the forward (filtering) pass.

\vspace{-0.5em}
\section{Experiments}
\vspace{-0.5em}
In this section, we present empirical results on time series imputation and forecasting tasks. Our primary focus was to investigate the models' ability to capture the underlying dynamics of the time series, gauged by the accuracy of long-term forecasts beyond the training context.
We experimented with the three variants of our model described in Section \ref{sec:model-formulation}: \ctssmlti, \ctssmnl, and \ctssmll.
Our main baselines were LatentODE and LatentSDE, two popular continuous-time latent variable models with deterministic and stochastic dynamics, respectively.
We also compared \ourmodel\ against several other baselines for individual experiments. We first discuss experiment results on the low-dimensional bouncing ball and damped pendulum datasets, then move to higher dimensional settings: walking sequences from the CMU Motion Capture (MoCap) dataset, the USHCN daily climate dataset, and two 32x32 dimensional video datasets (Box and Pong). Our code is available at \url{https://github.com/clear-nus/NCDSSM}.

\begin{table*}[ht]
	\scriptsize
	\centering
	\caption{Imputation and forecasting results for bouncing ball and damped pendulum datasets averaged over 50 sample trajectories. Mean \textpm\ standard deviation are computed over 5 independent runs.}
    \label{tab:low-dim-results}
	\resizebox{\textwidth}{!}{\begin{tabular}{clccccccc}
		\toprule
        \multirow{2}{*}{Dataset} & \multirow{2}{*}{Model}   &     \multicolumn{3}{c}{Imputation MSE ($\downarrow$) (\% Missing)}  & \multicolumn{4}{c}{Forecast MSE ($\downarrow$) (\% Missing)}\\
              \cmidrule(lr){3-5} \cmidrule(lr){6-9}
		&  &  30\%  &      50\%  & 80\% & 0\%  &   30\%  &      50\%  & 80\% \\
        
		\midrule
		\multirow{6}{*}[-0.4ex]{\rotatebox{90}{\parbox[c]{1.5cm}{\centering Bouncing Ball}}} & LatentODE~{\scriptsize\citep{rubanova2019latent}}  & 0.007 \textpm\ 0.000 &	0.008 \textpm\ 0.001 &	0.011 \textpm\ 0.000 & 0.386 \textpm\ 0.025	& 0.489 \textpm\ 0.133 &	0.422 \textpm\ 0.053 &	0.412 \textpm\ 0.048 \\
        & LatentSDE~{\scriptsize\citep{li2020scalable}}  & \textbf{0.006 \textpm\ 0.000} &	0.007 \textpm\ 0.000 &	0.011 \textpm\ 0.001 & 0.408 \textpm\ 0.043	& 1.209 \textpm\ 1.115 &	1.567 \textpm\ 2.263 &	0.352 \textpm\ 0.077 \\
        & GRUODE-B~{\scriptsize\citep{de2019gru}}  & 0.017 \textpm\ 0.001 &	0.026 \textpm\ 0.010 &	0.051 \textpm\ 0.003 & 0.868 \textpm\ 0.103	& 0.805 \textpm\ 0.315 &	0.856 \textpm\ 0.394 &	0.445 \textpm\ 0.182 \\
        \cmidrule{2-9}
        & \ctssmlti\  & 0.020 \textpm\ 0.001 &	0.026 \textpm\ 0.001 &	0.067 \textpm\ 0.002 & 0.592 \textpm\ 0.106	& 0.557 \textpm\ 0.014 &	0.556 \textpm\ 0.025 &	0.555 \textpm\ 0.022 \\
        & \ctssmnl\  & \textbf{0.006 \textpm\ 0.000} &	\textbf{0.006 \textpm\ 0.000} &	\textbf{0.007 \textpm\ 0.000} & \textbf{0.037 \textpm\ 0.018}	& 0.036 \textpm\ 0.007 &	\textbf{0.041 \textpm\ 0.007} &	0.115 \textpm\ 0.029 \\
        & \ctssmll\  & \textbf{0.006 \textpm\ 0.000} &	\textbf{0.006 \textpm\ 0.000} &	0.008 \textpm\ 0.001 & \textbf{0.037 \textpm\ 0.028}	& \textbf{0.034 \textpm\ 0.016} &	0.049 \textpm\ 0.034 &	\textbf{0.076 \textpm\ 0.017} \\
        \midrule
        \multirow{6}{*}[-0.4ex]{\rotatebox{90}{\parbox[c]{1.5cm}{\centering Damped \mbox{Pendulum}}}} & LatentODE~{\scriptsize\citep{rubanova2019latent}}  & 0.151 \textpm\ 0.002 &	0.155 \textpm\ 0.002 &	0.206 \textpm\ 0.013 & 0.097 \textpm\ 0.042 &	0.117 \textpm\ 0.001	& 0.119 \textpm\ 0.001 &	0.148 \textpm\ 0.007 \\
        & LatentSDE~{\scriptsize\citep{li2020scalable}} & 0.092 \textpm\ 0.076 &	0.148 \textpm\ 0.001 &	0.229 \textpm\ 0.001 & 0.046 \textpm\ 0.046 &	0.084 \textpm\ 0.058	& 0.147 \textpm\ 0.020 &	0.357 \textpm\ 0.096 \\
        & GRUODE-B~{\scriptsize\citep{de2019gru}} & 0.015 \textpm\ 0.001 &	0.023 \textpm\ 0.003 &	0.064 \textpm\ 0.003 & 0.244 \textpm\ 0.107 &	0.424 \textpm\ 0.617	& 0.124 \textpm\ 0.088 &	0.037 \textpm\ 0.036 \\
        \cmidrule{2-9}
        & \ctssmlti\  & 0.036 \textpm\ 0.001 &	0.057 \textpm\ 0.001 &	0.120 \textpm\ 0.002 & 0.282 \textpm\ 0.084 &	1.017 \textpm\ 1.363	& 1.527 \textpm\ 1.440 &	0.231 \textpm\ 0.050 \\
        & \ctssmnl\ & \textbf{0.008 \textpm\ 0.000} &	\textbf{0.011 \textpm\ 0.000} &	\textbf{0.033 \textpm\ 0.002} & \textbf{0.011 \textpm\ 0.004} &	0.011 \textpm\ 0.003	& \textbf{0.012 \textpm\ 0.003} &	\textbf{0.034 \textpm\ 0.019} \\
        & \ctssmll\ & \textbf{0.008 \textpm\ 0.000} &	\textbf{0.011 \textpm\ 0.000} &	0.037 \textpm\ 0.003 & 0.025 \textpm\ 0.030 &	\textbf{0.010 \textpm\ 0.001}	& 0.020 \textpm\ 0.008 &	0.055 \textpm\ 0.007 \\
		\bottomrule
	\end{tabular}}
    \vspace{-1em}
\end{table*}

\vspace{-0.5em}
\subsection{Bouncing Ball and Damped Pendulum}
\label{sec:exp-low-dim}
\vspace{-0.5em}
The bouncing ball and damped pendulum datasets have known ground truth dynamics, which facilitates quality assessment of the dynamics learned by a given model.
For details on these datasets, please refer to Appendix~\ref{app:datasets}.
In brief, the univariate bouncing ball dataset exhibits piecewise-linear dynamics, whilst bivariate damped pendulum dataset~\citep{karl2016deep,kurle2020deep} exhibits non-linear latent dynamics.

\begin{figure}
\includegraphics[width=1\linewidth]{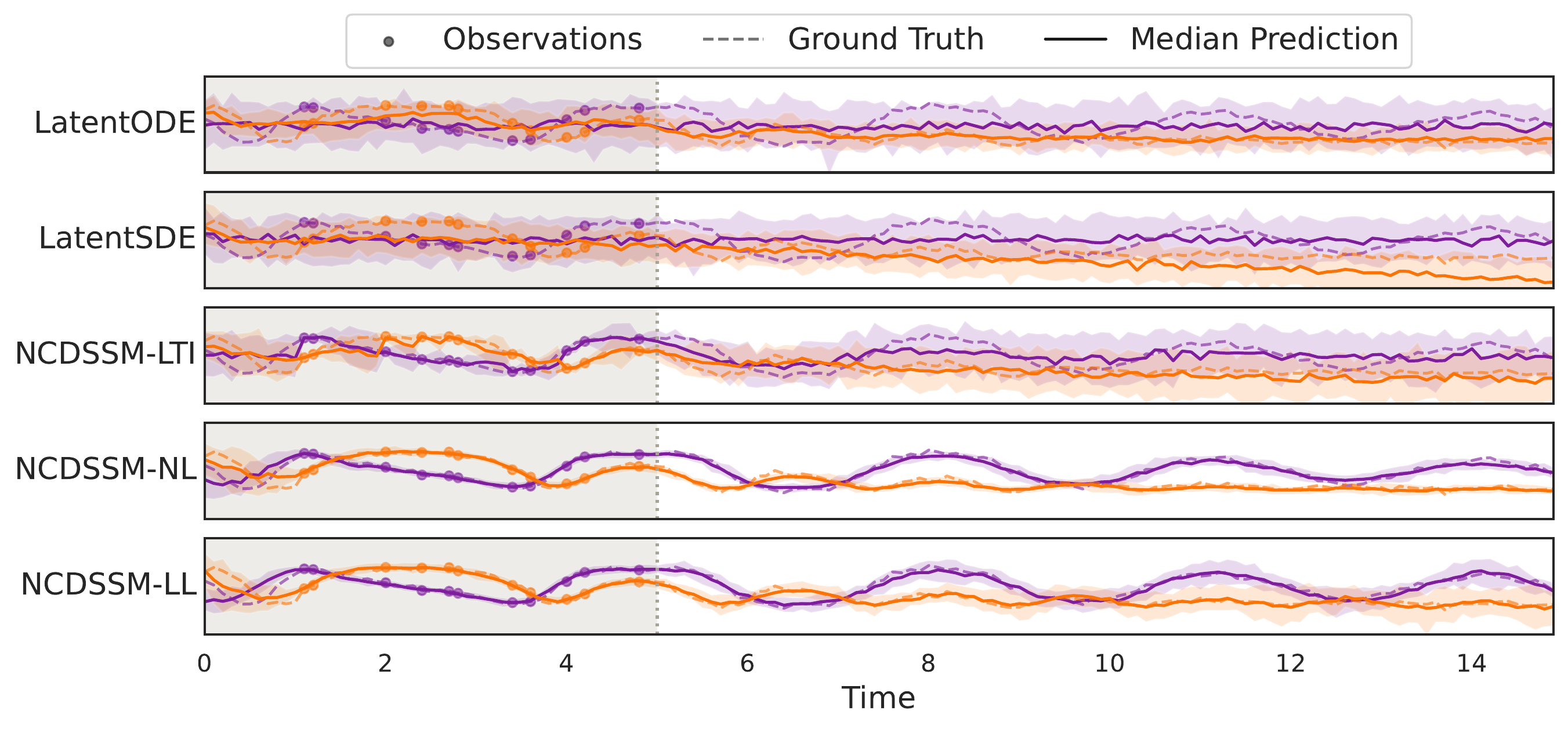}
\vspace{-2em}
\caption{Predictions from different models on the damped pendulum dataset in the 80\% missing data setting. The ground truth is shown using dashed lines with observed points in the context window (gray shaded region) shown as filled circles. The vertical dashed gray line marks the beginning of the forecast horizon. Solid lines indicate median predictions with 90\% prediction intervals shaded around them. The purple and orange colors indicate observation dimensions. \ctssmnl\ and \ctssmll\ are significantly better at forecasting compared to the baselines.}
\label{fig:low-dim-forecasts}
\vspace{-2em}
\end{figure}

\begin{table}
	\scriptsize
	\centering
    \vspace{-1em}
	\caption{Forecasting results for the CMU MoCap walking dataset averaged over 50 sample trajectories with 95\% prediction interval based on the $t$-statistic in parentheses. $^\dagger$Baseline results from \citet{solin2021scalable}.}
    \label{tab:mocap-results}
    \resizebox{\columnwidth}{!}{\begin{tabular}{lrr}
		\toprule
        \multirow{2}{*}{Model} & \multicolumn{2}{c}{MSE ($\downarrow$)}\\
        \cmidrule{2-3}
         & \multicolumn{1}{c}{$^\dagger$Setup~1} & \multicolumn{1}{c}{Setup~2}\\
        \midrule
        {np}ODE~{\scriptsize\citep{heinonen2018learning}} & 22.96 & \multicolumn{1}{c}{--} \\
        {Neural}ODE~{\scriptsize\citep{chen2018neural}} &  22.49 (0.88) & \multicolumn{1}{c}{--} \\
        ODE\textsuperscript{2}VAE-KL~{\scriptsize\citep{yildiz2019ode2vae}} & 8.09 (1.95) & \multicolumn{1}{c}{--}\\
        LatentODE~{\scriptsize\citep{rubanova2019latent}} & 5.98 (0.28) & 31.62	(0.05) \\
        LatentSDE~{\scriptsize\citep{li2020scalable}} & \textbf{4.03 (0.20)} & 9.52 (0.21) \\
        LatentApproxSDE~{\scriptsize\citep{solin2021scalable}}  &  7.55 (0.05) & \multicolumn{1}{c}{--} \\
        \midrule
        \ctssmlti\  & 13.90 (0.02) & 5.22 (0.02) \\
        \ctssmnl\  & 5.69 (0.01) & 6.73 (0.02) \\
        \ctssmll\  & 9.96 (0.01) & \textbf{4.74 (0.01)} \\
        \bottomrule
    \end{tabular}}
    \vspace{-2em}
\end{table}

We trained all the models on 10s/5s sequences (with a discretization of 0.1s) for bouncing ball/damped pendulum with 0\%, 30\%, 50\% and 80\% timesteps missing at random to simulate irregularly-sampled data. The models were evaluated on imputation of the missing timesteps and forecasts of 20s/10s beyond the training regime for bouncing ball/damped pendulum. 

Table~\ref{tab:low-dim-results} reports the imputation and forecast mean squared error (MSE) for different missing data settings. 
In summary, the \ourmodel\ models with non-linear and locally-linear dynamics (\ctssmnl\ and \ctssmll) perform well across datasets, settings, and random initializations, significantly outperforming the baselines. Furthermore, for these low-dimensional datasets, learning latent representations in the form of auxiliary variables is not required and we can set the recognition and emission functions in \eqref{eq:auxiliary-inference} and \eqref{eq:observation-emission} to identity functions. This results in \ourmodel\ models requiring 2-5 times fewer parameters than LatentODE and LatentSDE 
(cf. Table~\ref{tab:parameter-comparison} in the Appendix).

Fig.~\ref{fig:low-dim-forecasts} shows example predictions from the best performing run of every model for 80\% missing data for the pendulum (cf.~Appendix~\ref{app:additional-results} for other settings). \ctssmnl\ and \ctssmll\ generates far better predictions both inside and outside the context window compared to the baselines. Ordinary least squares (OLS) goodness-of-fit results in Table~\ref{tab:ols-pendulum} (Appendix) suggest that this performance can be attributed to our models having learnt the correct dynamics; latent states from \ctssmnl\ and \ctssmll\ are highly correlated with the ground truth angle and angular velocity for all missingness scenarios. In other words, the models have learnt a Markovian state space which is informative about the dynamics at a specific time. 

\vspace{-0.5em}
\subsection{CMU Motion Capture (Walking)}
\label{sec:exp-mocap}
\vspace{-0.5em}

This dataset comprises walking sequences of subject 35 from the CMU MoCap database containing joint angles of subjects performing everyday activities. We used a preprocessed version of the dataset from \citet{yildiz2019ode2vae} that has 23 50-dimensional sequences of length 300.

We tested the models under two setups. Setup 1~~\citep{yildiz2019ode2vae,li2020scalable,solin2021scalable}  involves training on complete 300 timestep sequences from the training set and using only the first 3 timesteps as context to predict the remaining 297 timesteps during test time. Although challenging, this setup does not evaluate the model's performance beyond the training context.  %
Thus, we propose Setup 2 in which we train the model only using the first 200 timesteps. During test time, we give the first 100 timesteps as context and predict the remaining 200 timesteps.

The forecast MSE results for both setups are reported in Table~\ref{tab:mocap-results}. \ctssmnl\ performs better than all baselines except LatentSDE on Setup~1 while \ourmodel\ models perform significantly better than baselines on Setup~2. This showcases \ourmodel's ability to correctly model the latent dynamics, aiding accurate long-term predictions beyond the training context. 

\vspace{-0.8em}
\subsection{USHCN Climate Indicators}
\label{sec:exp-ushcn}
\vspace{-0.5em}
We evaluated the models on the United States Historical Climatology Network (USHCN) dataset that comprises measurements of five climate indicators across the United States. The preprocessed version of this dataset from \citet{de2019gru} contains sporadic time series (i.e., with measurements missing \emph{both} over the time and feature axes) from 1,114 meteorological stations over 4 years. Following \citet{de2019gru}, we trained the models on sequences from the training stations and evaluated them on the task of predicting the next 3 measurements given the first 3 years as context from the held-out test stations. The results in Table~\ref{tab:ushcn-results} show that \ctssmnl\ outperforms all the baselines with \ctssmlti\ and \ctssmll\ performing better than most of the baselines. 
\vspace{-0.5em}
\subsection{Pymunk Physical Environments}
\label{sec:exp-pymunk}
\vspace{-0.5em}
Finally, we evaluated the models on two high-dimensional (video) datasets of physical environments used in \citet{fraccaro2017disentangled}, simulated using the Pymunk Physics engine~\cite{Blomqvist_Pymunk_2022}: Box and Pong. The box dataset consists of videos of a ball moving in a 2-dimensional box and the pong dataset consists of videos of a Pong-like environment where two paddles move to keep a ball in the frame at all times. Each frame is a 32x32 binary image. 

\begin{figure*}[ht]
    \centering
\includegraphics[width=1.0\linewidth]{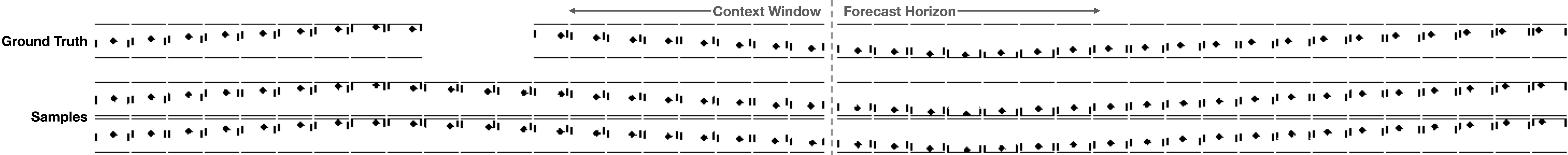}
    \vspace{-2em}
    \caption{Sample predictions from \ctssmnl\ on the Pong dataset. The top row is the ground truth with some missing observations in the context window. The next two rows show trajectories sampled from \ctssmnl{} upto 20 forecast steps. \ctssmnl\ is able to both impute and forecast accurately. Best viewed zoomed-in on a computer. More examples in Appendix~\ref{app:additional-results}.}
    \label{fig:pymunk-predictions}
    \vspace{-1.5em}
\end{figure*}

\begin{table}
	\scriptsize
	\centering
    \vspace{-1em}
	\caption{Forecasting results for the USHCN climate dataset. Mean \textpm\ standard deviation are computed over 5 folds as described in \citet{de2019gru}. $^\dagger$Results from \citet{de2019gru}. $^\ddagger$Results from \citet{liu2020learning}.}
    \label{tab:ushcn-results}
    \begin{tabular}{lc}
		\toprule
        Model & \multicolumn{1}{c}{MSE ($\downarrow$)}\\
        \midrule
        $^\dagger$NeuralODE-VAE~{\scriptsize\citep{chen2018neural}} & 0.83 \textpm\ 0.10 \\
        $^\dagger$SequentialVAE~{\scriptsize\citep{krishnan2015deep}} & 0.83 \textpm\ 0.07 \\
        $^\dagger$GRU-D~{\scriptsize\citep{che2018recurrent}} & 0.53 \textpm\ 0.06 \\
        $^\dagger$T-LSTM~{\scriptsize\citep{baytas2017patient}} & 0.59 \textpm\ 0.11 \\
        $^\dagger$GRUODE-B{\scriptsize~\citep{de2019gru}} & 0.43 \textpm\ 0.07\\
        $^\ddagger$ODE-RNN~{\scriptsize\citep{rubanova2019latent}} & 0.39 \textpm\ 0.06 \\
        $^\ddagger$LatentODE~{\scriptsize\citep{rubanova2019latent}} & 0.77 \textpm\ 0.09 \\
        $^\ddagger$LatentSDE~{\scriptsize\citep{li2020scalable}} & 0.74 \textpm\ 0.11 \\
        $^\ddagger$VSDN-F (IWAE)~{\scriptsize\citep{liu2020learning}} & 0.37 \textpm\ 0.06 \\
        \midrule
        \ctssmlti\  & 0.38 \textpm\ 0.07 \\
        \ctssmnl\  & \textbf{0.34 \textpm\ 0.06} \\
        \ctssmll\  & 0.37 \textpm\ 0.06 \\
        \bottomrule
    \end{tabular}
    \vspace{-2.2em}
\end{table}
\begin{table}
	\scriptsize
	\centering
	\caption{Forecasting results for the Box and Pong datasets averaged over 16 sample trajectories.}
    \label{tab:pymunk-results}
    \begin{tabular}{lcc}
		\toprule
        \multirow{2}{*}{Model} & \multicolumn{2}{c}{EMD ($\downarrow$)}\\
        \cmidrule{2-3}
         & Box & Pong\\
        \midrule
        LatentODE~{\scriptsize\citep{rubanova2019latent}} & 1.792 & 4.543 \\
        LatentSDE~{\scriptsize\citep{li2020scalable}} & 1.925 & 3.505 \\
        \midrule
        \ctssmlti\  & 1.685 & 3.265 \\
        \ctssmnl\  & 0.692 & \textbf{1.714} \\
        \ctssmll\  & \textbf{0.632} & 1.891 \\
        \bottomrule
    \end{tabular}
    \vspace{-2.2em}
\end{table}

We trained the models on sequences of 20 frames with 20\% of these frames randomly dropped. At test time, the models were evaluated on forecasts of 40 frames beyond the training context. For evaluation, we treat each image as a probability distribution on the XY-plane and report the earth mover's distance (EMD) between the ground truth and predicted images, averaged over the forecast horizon, in Table~\ref{tab:pymunk-results}. \ctssmnl\ and \ctssmll\ significantly outperform baseline models on both box and pong datasets. Fig.~\ref{fig:pymunk-emd} (Appendix) shows the variation of EMD against time for different models. In the context window (0-2s), all models have EMD close to 0; however, in the forecast horizon (2-6s), the EMD rises rapidly and irregularly for LatentODE and LatentSDE but does so gradually for \ctssmnl\ and \ctssmll. This indicates that the dynamics models learned by \ctssmnl\ and \ctssmll\ are both accurate and robust. 

Qualitatively, both \ctssmll\ and \ctssmnl\ correctly impute the missing frames and the forecasts generated by them are similar to ground truth. Fig.~\ref{fig:pymunk-predictions} shows sample predictions for the pong dataset generated by \ctssmnl. In contrast, other models only impute the missing frames correctly, failing to generate accurate forecasts (cf.~Appendix~\ref{app:additional-results}).  
\vspace{-0.8em}
\section{Discussion}
\label{sec:discussion}
\vspace{-0.5em}
\paragraph{Choice of dynamics.} The selection of latent dynamics heavily relies on the dataset and the specific problem at hand. Nevertheless, we offer some general guidelines based on our experiments and observations. The linear time-invariant (LTI) dynamics are well-suited when the time series exhibit approximate linearity or when fast inference is crucial (as the predict step can be analytically computed using matrix exponentials). The locally-linear (LL) dynamics performs exceptionally well out of the box and is highly desirable for achieving quick, high-quality results. It requires minimal tuning and regularization since the drift parameterization is straightforward through the $K$ base matrices, which control the dynamics' flexibility. On the other hand, the non-linear model demonstrates superior performance in most scenarios but necessitates careful parameterization, specifically in selecting the drift network, and rigorous regularization. We delve into the parameterization and regularization of non-linear dynamics in Appendix~\ref{app:stable-implementation} and anticipate that our findings will prove valuable for non-linear models beyond \ctssmnl.
\vspace{-1.3em}
\paragraph{Time complexity.} The time complexities of the predict and update steps primarily depend on drift function evaluation/matrix multiplication and matrix inversion, respectively. As a result, the overall complexity of the filtering process is $O(N_\mathrm{int} C_\mathrm{int}(N_\mathrm{dfe}) + T_\mathrm{obs} h^3)$, where $N_\mathrm{int}$ represents the number of integration steps, $C_\mathrm{int}$ denotes the cost of a single integration step (which is influenced by the number of drift function evaluations, $N_\mathrm{dfe}$), $T_\mathrm{obs}$ corresponds to the number of observed timesteps, and $h$ represents the dimensionality of the auxiliary variable. The first term aligns with the cost incurred by ODE-based models. The NCDSSM incurs an additional overhead of $O(T_\mathrm{obs} h^3)$ due to the Bayesian update. It is worth noting that although the complexity $h^3$ (or approximately $h^{2.4}$, depending on the chosen matrix inversion algorithm) exhibits poor scaling with respect to $h$, the assumption is made that the auxiliary variables are of low dimensionality.
\vspace{-1.3em}
\paragraph{Limitations.} As discussed above, the update step in NCDSSM incurs an additional computational cost of $O(T_\mathrm{obs} h^3)$ compared to ODE-based models, resulting in slower training and inference. In this study, our primary focus was on ensuring model stability and accurate predictions. Nonetheless, we acknowledge that several optimizations can be explored to enhance the computational efficiency of inference, e.g., by side-stepping explicit matrix inversion in the update step and by choosing adaptive solvers that allow for larger step sizes. We defer these investigations to future research.

Furthermore, the linearization of the drift function within our Gaussian assumed density approximation may impose limitations on the expressiveness of the non-linear dynamics. Although this approximation outperforms alternative types of dynamics in our experimental evaluations, we believe that further improvements are attainable. For instance, employing sigma point approximations through Gauss–Hermite integration or Unscented transformation~\citep[Ch.~9]{sarkka2019applied} could enhance the modeling accuracy and flexibility.
\vspace{-0.8em}
\section{Conclusion}
\label{sec:conclusion}
\vspace{-0.5em}
In this work, we proposed a model for continuous-time modeling of irregularly-sampled time series. \ourmodel\ improves continuous-discrete SSMs with neural network-based parameterizations of dynamics, and modern inference and learning techniques. Through the introduction of auxiliary variables, \ourmodel\ enables efficient modeling of high-dimensional time series while allowing accurate continuous-discrete Bayesian inference of the dynamic states. Experiments on a variety of low- and high-dimensional datasets show that \ourmodel\ outperforms existing models on time series imputation and forecasting tasks.
\vspace{-0.8em}
\section*{Acknowledgements}
\vspace{-0.5em}
This research is supported by the National Research Foundation Singapore and DSO National Laboratories under the AI Singapore Programme (AISG Award No: AISG2-RP-2020-016). We would like to express our gratitude to Richard Kurle, Fabian Falck, Alexej Klushyn, and Marcel Kollovieh for their valuable discussions and feedback. We would also like to extend our appreciation to the anonymous reviewers whose insightful suggestions helped enhance the clarity of the manuscript.

\bibliography{main}
\bibliographystyle{icml2023}

\newpage
\onecolumn
\appendix
\section{Proofs}
\subsection{Proof of Lemma~\ref{lemma:sum-sqrt-factors}}
\sumofsqrts*
\begin{proof}
Our proof is based on \citet[Thm.~3.2]{zonov2019kalman}. Consider the square root factor
$$\rmY = \begin{bmatrix}\rmA^{1/2} & \rmB^{1/2}\end{bmatrix}.$$
Clearly, $\rmC = \rmY\rmY^\top$; however, we also have $\rmC = \rmY\Theta\Theta^\top\rmY^\top$, for any orthogonal matrix $\Theta$. Thus, $\rmY\Theta$ is also a square root factor of $\rmC$. Let $\Theta$ be an orthogonal matrix such that
\begin{equation}
    \rmY\Theta = \begin{bmatrix}\rmX & \mathbf{0}_{n \times n}\end{bmatrix},\label{eq:sum-mat-sqrt-proof-step1}
\end{equation}
where $\rmX$ is an $n \times n$ lower triangular matrix. This implies that $\rmX$ is a square root factor of $\rmC$.

From \eqref{eq:sum-mat-sqrt-proof-step1}, we further have the following,
\begin{align}
    \rmY &= \begin{bmatrix}\rmX & \mathbf{0}_{n \times n}\end{bmatrix}\Theta^\top,\\
    \rmY^\top &= \Theta\begin{bmatrix}\rmX^\top \\ \mathbf{0}_{n \times n}\end{bmatrix},
\end{align}
where we post-multiply by $\Theta^\top$ in the first step and use the fact that $\Theta\Theta^\top = \rmI$, and transpose both sides in the second step. We have thus expressed $\rmY^\top$ as the product of an orthogonal matrix, $\Theta$, and an upper triangular matrix, $\begin{bmatrix}\rmX & \mathbf{0}_{n \times n}\end{bmatrix}^\top$. Such a factorization can be performed by $\mathrm{QR}$ decomposition. Thus, we can compute the square root factor $\rmC^{1/2} = \rmX$ via the $\mathrm{QR}$ decomposition of $\begin{bmatrix}\rmA^{1/2} & \rmB^{1/2}\end{bmatrix}^\top$. 
\end{proof}

\begin{algorithm}[ht]
\caption{Sum of Square Root Factors}\label{alg:sum-mat-sqrts}
\begin{algorithmic}[1]
\Function{SumMatrixSqrts}{$\rmA^{1/2}$, $\rmB^{1/2}$}
\State{$\underline{\hspace{1em}}, \begin{bmatrix}\rmC^{1/2} & \mathbf{0}_{n \times n}\end{bmatrix}^\top = \mathrm{QR}\left(\begin{bmatrix}\rmA^{1/2} & \rmB^{1/2}\end{bmatrix}^\top\right)$}
\State{\Return $\rmC^{1/2}$}
\EndFunction
\end{algorithmic}
\end{algorithm}

\section{Technical Details}
\label{app:implementation}
\subsection{Continuous-Discrete Bayesian Smoothing}
\label{app:smoothing}
Several approximate smoothing procedures based on Gaussian assumed density approximation have been proposed in the literature. We refer the reader to \citet{sarkka2013gaussian} for an excellent review of continuous-discrete smoothers. In the following, we discuss the \emph{Type II extended RTS smoother} which is linear in the smoothing solution. According to this smoother, the mean, $\rvm^s_t$, and covariance matrix, $\rmP^s_t$, of the Gaussian approximation to the smoothing density, $p_t(\rvz_t | \gY_T)$, follow the backward ODEs,
\begin{subequations}
\label{eq:smooth-linearization-approx}
\begin{align}
    \frac{d\rvm^s_t}{dt} &= \rvf(\rvm_t, t) + \rmC(\rvm_t, t)(\rvm^s_t - \rvm_t),\label{eq:mean-smooth}\\
    \frac{d\mathbf{P}^s_t}{dt} &= \rmC(\rvm_t, t)\rmP^s_t + \rmP^s_t\rmC^\top(\rvm_t, t) - \rmD(\rvm_t, t),\label{eq:cov-smooth}
\end{align}
\end{subequations}
where ($\rvm_t$, $\rmP_t$) is the filtering solution given by \eqref{eq:linearization-approx}, $\rmC(\rvm_t, t) = \rmF_{\rvz}(\rvm_t, t) + \rmD(\rvm_t, t)\mathbf{P}^{-1}_t$ and backward means that the ODEs are solved backwards in time from the filtering solution ($\rvm^s_T = \rvm_T$, $\rmP^s_T = \rmP_T$).
\subsection{Algorithms}
\label{app:algorithms}

\begin{algorithm}[hb]
\caption{Continuous-Discrete Bayesian Filtering}\label{alg:filtering}
\begin{algorithmic}[1]
\Function{Update}{$\rva_k$, $\rvm_k^-$, $(\rmP_k^-)^{1/2}$; $\rmH, \rmR$}
\State{$\rmR^{1/2} \gets \mathrm{cholesky}(\rmR)$}
\State{$\rmA \gets \begin{bmatrix}
    \rmR^{1/2} & \rmH(\rmP_k^{-})^{1/2}\\
    \mathbf{0}_{m \times d} & (\rmP_k^{-})^{1/2}
    \end{bmatrix}$}
\State{$\underline{\hspace{1em}}, \begin{bmatrix}
    \rmX & \mathbf{0}\\
    \rmY & \rmZ
    \end{bmatrix}^\top \gets \mathrm{QR}(\rmA^\top)$}
\State{$\rmK_k \gets \rmY\rmX^{-1}$}
\State{$\hat{\rva}_k \gets \rmH\rvm_k^-$}
\State{$\rvm_k \gets \rvm_k^- + \rmK_k(\rva_k - \hat{\rva}_k)$}
\State{$\rmP_k^{1/2} \gets \rmZ$}
\State{$\rmS_k^{1/2} \gets \rmX$}
\State{\Return $\rvm_k$, $\rmP_k^{1/2}$, $\hat{\rva}_k$, $\rmS_k^{1/2}$}
\EndFunction
\Statex
\Function{Predict}{$\rvm_k$, $\rmP_k^{1/2}$, $t_k$, $t_{k+1}$; $\rvf, \rmQ, \rmG$}
\State{$\bm{\Phi}_1 \gets \rmI$}
\State{$\{\tilde{\rvm}_j\}_{j=1}^n \gets \mathrm{odeint}\left(\frac{d\rvm_t}{dt} = \rvf(\rvm_t, t), \rvm_k, [\tau_1=t_k,\dots,\tau_n=t_{k+1}]\right)$}
\State{$\{\tilde{\bm{\Phi}}_j\}_{j=1}^n \gets \mathrm{odeint}\left(\frac{d\bm{\Phi}_{t}}{dt} = \rmF_{\rvz}(\rvm_t, t)\bm{\Phi}_{t}, \bm{\Phi}_1, [\tau_1=t_k,\dots,\tau_n=t_{k+1}]\right)$}
\LeftComment{the two coupled ODEs above are solved together.}
\State{$\rvm_{k+1}^{-} \gets \tilde{\rvm}_n$}
\State{$(\rmP_{k+1}^{-})^{1/2} \gets$ \Call{ReduceSumMatrixSqrts}{$\left[\tilde{\bm{\Phi}}_n\rmP_k^{1/2}, \sqrt{\frac{\eta}{2}}\tilde{\bm{\Phi}}_1\rmD_{\tau_1}^{1/2}, \sqrt{\eta}\tilde{\bm{\Phi}}_2\rmD_{\tau_2^{1/2}},\dots,\sqrt{\frac{\eta}{2}}\tilde{\bm{\Phi}}_n\rmD_{\tau_n}^{1/2}\right]$}}
\LeftComment{\textsc{ReduceSumMatrixSqrts} uses the \textsc{SumMatrixSqrts} function in Algorithm~\ref{alg:sum-mat-sqrts}, reducing it over the list.}
\State{\Return $\rvm_{k+1}^{-}$, $(\rmP_{k+1}^{-})^{1/2}$}
\EndFunction
\Statex
\Function{Filter}{$\rva_{0:T}$, $t_{0:T}$; $\theta$}
\State{$\bm{\mu}_0, \bm{\Sigma}_0, \rvf, \rmQ, \rmG, \rmH, \rmR \gets \theta$}
\State{$\rvm_0^-, (\rmP_0^-)^{1/2} \gets \bm{\mu}_0, \mathrm{cholesky}(\bm{\Sigma}_0)$}
\State{$\ell \gets 0$}
\For{$i \gets 0, T$}
    \State{$\rvm_i, \rmP_i^{1/2}, \hat{\rva}_i, \rmS_i^{1/2} \gets$ \Call{Update}{$\rva_i$, $\rvm_i^-$, $(\rmP_i^-)^{1/2}$; $\rmH, \rmR$}}
    \State{$\ell \gets \ell + \log \gN(\rva_i; \hat{\rva}_i, \rmS_i)$}
    \If{$i = T$}
        \State{\textbf{break}}
    \EndIf
    \State{$\rvm_{i+1}^-, (\rmP_{i+1}^-)^{1/2} \gets$ \Call{Predict}{$\rvm_i$, $\rmP_i^{1/2}$, $t_i$, $t_{i+1}$; $\rvf, \rmQ, \rmG$}}
\EndFor
\State{\Return $\{\rvm_i, \rmP_i^{1/2}\}_{i=0}^T$, $\ell$}
\EndFunction
\end{algorithmic}
\end{algorithm}
In this section, we discuss the \emph{stable} filtering and smoothing algorithms used in \ourmodel. We refer the reader to the accompanying code for specific implementation details. 

Algorithm~\ref{alg:sum-mat-sqrts} provides a utility function --- \textsc{SumMatrixSqrts} --- that uses Lemma~\ref{lemma:sum-sqrt-factors} to compute the square root factor of the sum of two matrices with square root factors. The square root factor version of the continuous-discrete Bayesian filtering algorithm is given in Algorithm~\ref{alg:filtering}. Note that the \textsc{Predict} step for linear time-invariant dynamics can be performed analytically using matrix exponentials~\citep[Ch.~6]{sarkka2019applied}. We used the analytic solver for some of our experiments. The Type II RTS smoothing algorithm (Algorithm~\ref{alg:smoothing}) takes the filtered distributions as input and computes the smoothed distribution at every filtered timestep. To compute the smoothed distribution between observed timesteps, we cache the filtered distributions at these timesteps and provide them to the \textsc{Smooth} function together with the filtered distributions at observed timesteps.

\begin{algorithm}[htb]
\caption{Continuous-Discrete Type II Extended RTS Smoothing}\label{alg:smoothing}
\begin{algorithmic}[1]
\Function{SmoothStep}{$\rvm_k^s$, $(\rmP_k^s)^{1/2}$, $\rvm_k$, $\rmP_k^{1/2}$, $t_k$, $t_{k-1}$; $\rvf, \rmQ, \rmG$}
\State{$\bm{\Phi}^s_1 \gets \rmI$}
\State{$\{\tilde{\rvm}^s_j\}_{j=1}^n \gets \mathrm{odeint}\left(\frac{d\rvm^s_t}{dt} = \rvf(\rvm_k, t) + \rmC(\rvm_k, t)(\rvm^s_t - \rvm_k), \rvm^s_k, [\tau_1=t_k,\dots,\tau_n=t_{k-1}]\right)$}
\State{$\{\tilde{\bm{\Phi}}^s_j\}_{j=1}^n \gets \mathrm{odeint}\left(\frac{d\bm{\Phi}^s_{t}}{dt} = \rmC(\rvm_k, t)\bm{\Phi}^s_{t}, \bm{\Phi}^s_1, [\tau_1=t_k,\dots,\tau_n=t_{k-1}]\right)$}
\LeftComment{the two coupled ODEs above are solved together.}
\State{$\rvm_{k-1}^{s} \gets \tilde{\rvm}^s_n$}
\State{$(\rmP_{k-1}^{s})^{1/2} \gets$ \Call{ReduceSumMatrixSqrts}{$\left[\tilde{\bm{\Phi}}^s_n\rmP_k^{1/2}, \sqrt{\frac{\eta}{2}}\tilde{\bm{\Phi}}^s_1\rmD_{\tau_1}^{1/2}, \sqrt{\eta}\tilde{\bm{\Phi}}^s_2\rmD_{\tau_2^{1/2}},\dots,\sqrt{\frac{\eta}{2}}\tilde{\bm{\Phi}}^s_n\rmD_{\tau_n}^{1/2}\right]$}}
\LeftComment{\textsc{ReduceSumMatrixSqrts} uses the \textsc{SumMatrixSqrts} function in Algorithm~\ref{alg:sum-mat-sqrts}, reducing it over the list.}
\State{\Return $\rvm_{k-1}^{s}$, $(\rmP_{k-1}^{s})^{1/2}$}
\EndFunction
\Statex
\Function{Smooth}{$\{\rvm_i, \rmP_i^{1/2}\}_{i=0}^T$, $t_{0:T}$; $\theta$}
\State{$\bm{\mu}_0, \bm{\Sigma}_0, \rvf, \rmQ, \rmG, \rmH, \rmR \gets \theta$}
\State{$\rvm_T^s, (\rmP_T^s)^{1/2} \gets \rvm_T, \rmP_T^{1/2}$}
\For{$i \gets T, 1$}\Comment{note the time reversal.}
    \State{$\rvm_{i-1}^s, (\rmP_{i-1}^s)^{1/2} \gets$ \Call{SmoothStep}{$\rvm_i^s$, $(\rmP_i^s)^{1/2}$, $\rvm_{i-1}$, $\rmP_{i-1}^{1/2}$, $t_i$, $t_{i-1}$; $\rvf, \rmQ, \rmG$}}
\EndFor
\State{\Return $\{\rvm_i^s, (\rmP_i^s)^{1/2}\}_{i=0}^T$}
\EndFunction
\end{algorithmic}
\end{algorithm}
\subsection{Stable Implementation (Contd.)}
\label{app:stable-implementation}
\paragraph{Square Root Factor Measurement Update.} In Section~\ref{sec:stable-implementation}, we discussed a square root factor version of the measurement update step via the $\mathrm{QR}$ decomposition of $\rmA^\top$, where,
\begin{equation}
    \rmA = \begin{bmatrix}
    \rmR^{1/2} & \rmH(\rmP_k^{-})^{1/2}\\
    \mathbf{0}_{m \times d} & (\rmP_k^{-})^{1/2}
    \end{bmatrix}.
\end{equation}
Let
$
    \Theta, \rmU = \mathrm{QR}(\rmA^\top)
$, where
\begin{equation}
    \rmU = \begin{bmatrix}
    \rmX & \mathbf{0}\\
    \rmY & \rmZ
    \end{bmatrix}^\top.
\end{equation}
In the following, we show how $\rmP_k^{1/2} = \rmZ$. Our proof is based on \citet{zonov2019kalman} and we refer the reader to \citet[Appendix~A]{zonov2019kalman} for the proof of $\rmK_k = \rmY\rmX^{-1}$.
\begin{proof}
Note that $\rmA$ is a square root factor of 
\begin{equation}
    \begin{bmatrix}
    \rmR + \rmH\rmP_k^{-}\rmH^\top & \rmH\rmP_k^{-}\\
    (\rmP_k^{-})^\top\rmH^\top & \rmP_k^{-}
    \end{bmatrix}.\label{eq:sqrt-update-step-1}
\end{equation}
Matching the terms in (\ref{eq:sqrt-update-step-1}) with the terms in
\begin{equation}
    \rmU\rmU^\top = \begin{bmatrix}
    \rmX\rmX^\top & \rmX\rmY^\top \\
    \rmY\rmX^\top & \rmY\rmY^\top + \rmZ\rmZ^\top
    \end{bmatrix},
\end{equation}
we get the following equations,
\begin{subequations}
    \begin{align}
    \rmX\rmX^\top &= \rmR + \rmH\rmP_k^{-}\rmH^\top,\\
    \rmX\rmY^\top &= \rmH\rmP_k^{-},\\
    \rmY\rmX^\top &= (\rmP_k^{-})^\top\rmH^\top,\\
    \rmY\rmY^\top + \rmZ\rmZ^\top &= \rmP_k^{-}.\label{eq:sqrt-update-step-2-d}
\end{align}
\label{eq:sqrt-update-step-2}
\end{subequations}
From \eqref{eq:sqrt-update-step-2-d}, we have the following,
\begin{subequations}
    \begin{align}
    \rmY\rmY^\top + \rmZ\rmZ^\top &= \rmP_k^{-},\\
    \rmZ\rmZ^\top &= \rmP_k^{-} - \rmY\rmY^\top,\\
    \rmZ\rmZ^\top &= \rmP_k^{-} - \rmY(\rmX^\top\rmX^{-\top})(\rmX^{-1}\rmX)\rmY^\top,\\
    \rmZ\rmZ^\top &= \rmP_k^{-} - \rmY\rmX^\top(\rmX\rmX^{\top})^{-1}\rmX\rmY^\top,
\end{align}
\end{subequations}
where we introduce $\rmI = (\rmX^\top\rmX^{-\top})(\rmX^{-1}\rmX)$ in the third step and use the property $(\rmX\rmX^{\top})^{-1} = \rmX^{-\top}\rmX^{-1}$ in the last step. Substituting values from \eqref{eq:sqrt-update-step-2}, we get,
\begin{subequations}
    \begin{align}
    \rmZ\rmZ^\top &= \rmP_k^{-} - (\rmP_k^{-})^\top\rmH^\top\rmS_k^{-1}\rmH\rmP_k^{-}\\
    \rmZ\rmZ^\top &= \rmP_k^{-} - (\rmP_k^{-})^\top\rmH^\top\rmS_k^{-1}(\rmS_k\rmS_k^{-1})\rmH\rmP_k^{-}\\
    \rmZ\rmZ^\top &= \rmP_k^{-} - \rmP_k^{-}\rmH^\top\rmS_k^{-1}\rmS_k\rmS_k^{-\top}\rmH(\rmP_k^{-})^\top\\
    \rmZ\rmZ^\top &= \rmP_k^{-} - \rmK_k\rmS_k\rmK_k^\top
\end{align}
\end{subequations}
where we introduce $\rmI = \rmS_k\rmS_k^{-1}$ in the second step, use the fact that $\rmS_k$ is symmetric in the third step and substitute the value of $\rmK_k$ from \eqref{eq:kalman-gain} in last step. Note that
$
\rmZ\rmZ^\top = \rmP_k^{-} - \rmK_k\rmS_k\rmK_k^\top  = \rmP_k$; therefore, $\rmZ = \rmP_k^{1/2}$. 
\end{proof}

\paragraph{Regularizing Non-Linear Dynamics.} We now discuss the techniques we employed to regularize the latent dynamics in \ourmodel. Particularly in the case of non-linear dynamics (\ctssmnl), regularization is critical for stable training. The drift function, $\rvf$, was parameterized by an MLP in all our experiments. We experimented with the $\tanh$ and $\mathrm{softplus}$ non-linearities. We found that applying the non-linearity after the last layer was important when using $\tanh$. Furthermore, we also initialized the parameters of the last layer to $0$ when using $\tanh$. In the case of experiments with a large time interval (e.g., MoCap and USHCN), application of spectral normalization~\citep{miyato2018spectral} along with the $\mathrm{softplus}$ non-linearity proved critical for stable training. In the following, we present our hypothesis on why spectral normalization stabilizes training.

According to \citet[Section~5.2]{oksendal2003stochastic}, one of the conditions for the existence of a unique solution of an SDE is the Lipschitz continuity of the drift function, $\rvf$. Applying spectral normalization regularizes the neural network to be 1-Lipschitz, aiding its solvability using numerical methods. However, spectral normalization is even more important in the case of \ourmodel\ from a practical perspective --- it prevents the numerical explosion of the elements of $\bm{\Phi}_t$ in the prediction step \eqrefp{eq:fundamental-ode}, as discussed below. 

Consider the case of a fixed Jacobian matrix $\rmF_\rvz$ in an interval $[t_1, t_2]$. In this case, the solution of \eqref{eq:fundamental-ode} is given by
\begin{equation}
    \bm{\Phi}_{t_2} = \exp{(\rmF_\rvz(t_2 - t_1))}\bm{\Phi}_{t_1},
\end{equation}
where $\exp{(\rmF_\rvz(t_2 - t_1))}$ denotes the matrix exponential. For unregularized drifts, the elements of $\exp{(\rmF_\rvz(t_2 - t_1))}$ can become arbitrarily large. However, in the case of 1-Lipschitz drift functions (as provided by spectral normalization), the spectral norm of $\exp{(\rmF_\rvz)}$ is bounded by $\exp(1)$, as shown in Lemma~\ref{lemma:lip-mat-exp}. This controls the growth rate of the elements of fundamental matrix, $\bm{\Phi}_t$. 
\begin{lemma}
\label{lemma:lip-mat-exp}
Let $\rvg: \R^m \to \R^m$ be a 1-Lipschitz function and $\rmJ_{\rvg}: \R^m \to \R^{m \times m}$ be its Jacobian function. Then, $\|\exp(\rmJ_{\rvg}(\rvz))\|_2 \leq \exp(1)~\forall~\rvz \in \R^m$ where $\|\cdot\|_2$ denotes the spectral norm of a matrix.
\end{lemma}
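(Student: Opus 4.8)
The plan is to bound the spectral norm of $\exp(\rmJ_{\rvg}(\rvz))$ by controlling the spectral norm of $\rmJ_{\rvg}(\rvz)$ itself, and then using submultiplicativity of the spectral norm together with the power series definition of the matrix exponential. First I would recall that a function $\rvg$ is $1$-Lipschitz (with respect to the Euclidean norm) if and only if its Jacobian satisfies $\|\rmJ_{\rvg}(\rvz)\|_2 \leq 1$ for all $\rvz$; this follows from the mean value inequality in one direction and from the definition of the operator norm as a supremum of directional derivatives in the other. Hence for every $\rvz \in \R^m$ we have $\|\rmJ_{\rvg}(\rvz)\|_2 \leq 1$.

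Next I would fix $\rvz$, write $\rmM = \rmJ_{\rvg}(\rvz)$, and expand
\begin{equation*}
    \exp(\rmM) = \sum_{k=0}^{\infty} \frac{\rmM^k}{k!}.
\end{equation*}
Applying the triangle inequality for the spectral norm to the (absolutely convergent) series and then submultiplicativity, $\|\rmM^k\|_2 \leq \|\rmM\|_2^k \leq 1$, gives
\begin{equation*}
    \|\exp(\rmM)\|_2 \leq \sum_{k=0}^{\infty} \frac{\|\rmM\|_2^k}{k!} \leq \sum_{k=0}^{\infty} \frac{1}{k!} = \exp(1).
\end{equation*}
Since $\rvz$ was arbitrary, this proves $\|\exp(\rmJ_{\rvg}(\rvz))\|_2 \leq \exp(1)$ for all $\rvz \in \R^m$.

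The only step requiring a little care is the equivalence between $1$-Lipschitz continuity and the pointwise bound $\|\rmJ_{\rvg}\|_2 \leq 1$; strictly this needs $\rvg$ to be differentiable (assumed here, since we speak of its Jacobian) and uses that along any segment $\rvg(\rvy) - \rvg(\rvx) = \int_0^1 \rmJ_{\rvg}(\rvx + s(\rvy-\rvx))(\rvy-\rvx)\,ds$, so $\|\rvg(\rvy)-\rvg(\rvx)\|_2 \leq \sup_s \|\rmJ_{\rvg}(\cdot)\|_2 \|\rvy-\rvx\|_2$. I do not expect any genuine obstacle: the rest is the standard norm estimate for the exponential series. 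One could even skip the Lipschitz detour entirely and just assume $\|\rmJ_{\rvg}(\rvz)\|_2 \leq 1$ directly, but tying it to the $1$-Lipschitz property is what connects the lemma to the spectral normalization discussion preceding it.
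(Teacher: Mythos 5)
Your proposal is correct and follows essentially the same route as the paper's proof: bound $\|\rmJ_{\rvg}(\rvz)\|_2 \leq 1$ from the $1$-Lipschitz property, then apply the triangle inequality and submultiplicativity to the power series of the matrix exponential to get $\|\exp(\rmJ_{\rvg}(\rvz))\|_2 \leq \exp(\|\rmJ_{\rvg}(\rvz)\|_2) \leq \exp(1)$. Your extra remark justifying the Lipschitz--Jacobian equivalence via the integral form of the mean value inequality is a welcome addition of rigor that the paper states without proof.
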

\begin{proof}
    The spectral norm of the Jacobian of a $K$-Lipschitz function is bounded by $K$. Thus, we have,
    \begin{equation}
        \|\rmJ_{\rvg}(\rvz)\|_2 \leq 1~\forall~\rvz \in \R^m.\label{eq:lip-jac-bound}
    \end{equation}
    Using the power series representation of the matrix exponential,
    $$
    \exp(\rmA) = \sum_{k=0}^\infty \frac{\rmA^k}{k!},
    $$
    we get the following bound on $\|\exp(\rmJ_{\rvg}(\rvz))\|_2$, 
    \begin{equation}
        \|\exp(\rmJ_{\rvg}(\rvz))\|_2 \leq \sum_{k=0}^\infty \left\|\frac{\rmJ_{\rvg}(\rvz)^k}{k!}\right\|_2 \leq \sum_{k=0}^\infty \frac{\left\|\rmJ_{\rvg}(\rvz)\right\|_2^k}{k!} = \exp(\left\|\rmJ_{\rvg}(\rvz)\right\|_2).\label{eq:mat-exp-spectral-bound}
    \end{equation}
    Combining \eqref{eq:mat-exp-spectral-bound} with \eqref{eq:lip-jac-bound}, we get,
    \begin{equation}
        \|\exp(\rmJ_{\rvg}(\rvz))\|_2 \leq \exp(\left\|\rmJ_{\rvg}(\rvz)\right\|_2) \leq \exp(1),
    \end{equation}
    which completes the proof.
\end{proof}

For the same reasons as discussed above, we initialized the transition matrices in our linear models to be random orthogonal matrices as orthogonal matrices have spectral norm equal to 1. However, in the case of \ctssmlti\ on the USHCN dataset, this initialization was not sufficient during the initial phase of training and we used a random skew-symmetric matrix instead. The matrix exponential of a skew-symmetric matrix is an orthogonal matrix. We generated a random skew-symmetric matrix as follows,
\begin{align*}
    \rmF &\sim \left[\gN(0, 1)\right]^{m \times m},\\
    \rmF &= \left(\frac{\rmF - \rmF^\top}{2}\right).
\end{align*}

\paragraph{Fixed Measurement Matrix.} We used a fixed rectangular identity matrix as the auxiliary measurement matrix ($\rmH$ in Eq.~\ref{eq:auxiliary-emission}) in our bouncing ball, damped pendulum and CMU MoCap (walking) experiments as it lead to improved learning of dynamics. This parameterization forces the model to learn the static (e.g., position) and dynamic (e.g., velocity) components in separate elements of the latent state, thereby disentangling them~\citep{klushyn2021latent}. 
\subsection{Imputation and Forecasting}
In this section, we describe how to perform imputation and forecasting using a trained \ourmodel. 

For imputation, the timesteps at which imputation is to be performed are provided to the \textsc{Filter} function during filtering. The filtered distributions are then passed to the \textsc{Smooth} function and (imputed) samples are drawn from the smoothed distributions. 

For forecasting, filtering is first performed over the context time series. The \textsc{Predict} function is then used up to end of the forecast horizon, starting from the last filtered distribution. Sample forecast trajectories are then drawn from these predicted distributions.

\section{Experiment Details}
\label{app:exp-details}

\subsection{Datasets}
\label{app:datasets}

\paragraph{Bouncing Ball and Damped Pendulum.} 
The bouncing ball dataset comprises univariate time series of the position of a ball bouncing between two fixed walls, in the absence of dissipative forces. 
The initial position, $x_0$, and velocity, $v_0$, of the ball are chosen at random, as follows,
\begin{align}
    x_0 &\sim \gU(-1, 1),\\
    v_0 &\sim \gU(0.05, 0.5) \times \gU\{-1, 1\},
\end{align}
where $\gU(a, b)$ denotes a uniform distribution on $(a, b)$ and $\gU\{c_1, \dots, c_k\}$ denotes a uniform categorical distribution on $\{c_1, \dots, c_k\}$. The observed position, $y_k$, is a corrupted version of the true position, $x_k$,
\begin{align}
    y_k \sim \gN(x_k, 0.05^2).
\end{align}
Collisions with the walls, located at $-1$ and $+1$, are assumed to be perfectly elastic, i.e., the sign of the velocity gets flipped when the ball hits either of the walls. Thus, the ball exhibits piecewise-linear dynamics. We used the Euler integrator with a step size of 0.1s to simulate the dynamics. The training, validation, and test datasets consist of 5000, 500, and 500 sequences of length 30s each, respectively. 

The damped pendulum dataset~\citep{karl2016deep,kurle2020deep} comprises bivariate time series of the XY-coordinates of a pendulum oscillating in the presence of a damping force.
The non-linear latent dynamics of this dataset is given by,
\begin{align}
    \frac{d\theta_t}{dt} &= \omega_t,\\
    \frac{d\omega_t}{dt} &= -\frac{g}{l}\sin(\theta_t)  -\frac{\gamma}{m}\omega_t,
\end{align}
where $\theta_t$ and $\omega_t$ are the angle and angular velocity, respectively, and $g=9.81$, $l=1$, $m=1$, and $\gamma=0.25$ are the acceleration due to gravity, the length of the massless cord of the pendulum, the mass of the pendulum bob, and the damping coefficient, respectively. 
The initial angle, $\theta_0$, and angular velocity, $\omega_0$, of the pendulum are chosen at random, as follows,
\begin{align}
    \theta_0 &= \pi + \mathrm{clip}\left(\eps, -2, 2\right),\\
    \omega_0 &= 4\times\mathrm{clip}\left(\eps, -2, 2\right),
\end{align}
where $\eps \sim \gN(0, 1)$ and $\mathrm{clip}(x, a, b)$ denotes clipping the value of $x$ between $a$ and $b$. The observations are Cartesian coordinates of the pendulum's bob with additive Gaussian noise, $\gN(0, 0.05^2)$. We used the RK4 integrator to simulate the latent dynamics with a step size of 0.1s. The training, validation, and test datasets consist of 5000, 1000, and 1000 sequences of length 15s each, respectively. 
\paragraph{CMU Motion Capture (Walking).} The CMU Motion Capture database\footnote{The original CMU MoCap database is available at: \url{http://mocap.cs.cmu.edu}.} comprises time series of joint angles of human subjects performing everyday activities, e.g., walking, running, and dancing. We used walking sequences of subject 35 from this database for our experiments. A preprocessed version of this dataset from \citet{yildiz2019ode2vae} consists of 23 50-dimensional sequences of 300 timesteps each, split into 16 training, 3 validation and 4 test sequences. 
\paragraph{USHCN Climate Indicators.} The USHCN Climate dataset\footnote{The original USHCN Climate dataset is available at: \url{https://cdiac.ess-dive.lbl.gov/ftp/ushcn_daily/}.} consists of measurements of five climate indicators --- precipitation, snowfall, snow depth, minimum temperature, and maximum temperature --- across the United States. The preprocessed version of this dataset from \citet{de2019gru} contains sporadic time series from 1,114 meteorological stations with a total of 386,068 unique observations over 4 years, between 1996 and 2000. The timestamps are scaled to lie in $[0, 200]$. The 1,114 stations are split into 5 folds of 70\% training, 20\% validation, and 10\% test stations, respectively. 
\paragraph{Pymunk Physical Environments.} The Pymunk physical environments datasets are video datasets of physical environments simulated using the Pymunk Physics engine. We used two environments proposed in \citet{fraccaro2017disentangled}:  Box and Pong. Each frame of these videos is a 32 $\times$ 32 binary image. The Box dataset consists of videos of a ball moving inside a 2-dimensional box with perfectly elastic collisions with the walls of the box. The Pong dataset consists of videos of a Pong-like environment with a ball and two paddles that move to keep the ball inside the frame. Both datasets consist of 5000 training, 100 validation, and 1000 test videos with 60 frames each. We refer the reader to \citet{fraccaro2017disentangled} for further details on how these datasets are generated\footnote{The scripts for generating Pymunk datasets are available at: \url{https://github.com/simonkamronn/kvae}.}. 

\subsection{Training and Evaluation Setups}
\paragraph{Bouncing Ball and Damped Pendulum.} We trained all the models on the first 10s/5s of the sequences (i.e., 100/50 steps) from the training dataset for the bouncing ball/damped pendulum datasets. We randomly dropped 30\%, 50\%, and 80\% of the training steps for the missing-data experiments. For evaluation, we report the MSE over the missing (for imputation) and the next 200/100 timesteps (for forecast) for the bouncing ball/damped pendulum test datasets. The MSE was averaged over 5 independent runs for 50 sample trajectories.

\paragraph{CMU Motion Capture (Walking).} For Setup~1, we trained \ourmodel\ models on complete 300-timestep sequences from the training set. During test time, we evaluated the predictive performance on the next 297 steps with a context of the first 3 steps from the test set. For Setup~2, we trained the models on the first 200 timesteps from sequences in the training set. During test time, we provided the models with a context of the first 100 timesteps from sequences in the test set and evaluated their performance on the next 200 timesteps. We report the MSE averaged over 50 sample trajectories together with 95\% prediction interval based on the $t$-statistic for a single run, as reported in prior works.

\paragraph{USHCN Climate Indicators.} We trained \ourmodel\ models under the same setup as \citet{de2019gru} using 4 years of observations from the training stations. During test time, we provided the models with the first 3 years of observations from the test set as context and evaluated their performance on the accuracy of the next 3 measurements. The MSE was computed between the mean of 50 sample forecast trajectories (simulating a point forecast) and the ground truth, averaged over the 5 folds.  

\paragraph{Pymunk Physical Environments.} We trained the models on the first 20 frames of the videos from the training dataset with 20\% of the frames randomly dropped. During test time, we provided the models with a context of 20 frames and evaluated the forecast performance on the next 40 frames. We report the EMD between the predicted and the ground truth frames, averaged over 16 sample trajectories. The EMD was computed using the \texttt{ot.emd2} function from the Python Optimal Transport (POT) library~\citep{flamary2021pot} with the \texttt{euclidean} metric as the cost function. 

\subsection{Experiment Configurations}
We ran all our experiments on 2 machines with 1 Tesla T4 GPU, 16 CPUs, and 64 GB of memory each. In this section, we report training and hyperparameter configurations used in our experiments. We refer the reader to the accompanying code for specific details.

We optimized all models using the Adam optimizer with a learning rate of 0.01 for all the datasets except Pymunk physical environments where we used 0.002. We reduced the learning rate exponentially with a decay rate of 0.9 every 500 steps for the bouncing ball, damped pendulum, and CMU MoCap (walking) datasets, every 100 steps for the USHCN climate dataset, and every 3000 steps for the Pymunk physical environments datasets. We trained the models for 5K, 2K, 2.5K, 150, and 100K steps with a batch size of 50, 64, 16, 100, and 32 for the bouncing ball, damped pendulum, CMU MoCap (walking), USHCN climate indicators, and Pymunk physical environments, respectively. 

For \ourmodel\ models, we used the following auxiliary inference and emission networks for each dataset:
\begin{itemize}[noitemsep,nolistsep]
    \item \textbf{Bouncing Ball, Damped Pendulum, and USHCN Climate Indicators}
    \begin{itemize}[noitemsep,nolistsep]
        \item Auxiliary inference network: \texttt{Identity()}
        \item Emission network: \texttt{Identity()}
    \end{itemize}
    \item \textbf{CMU Motion Capture (Walking)}
    \begin{itemize}[noitemsep,nolistsep]
        \item Auxiliary inference network: \texttt{Input(d) $\rightarrow$ Linear(64) $\rightarrow$ Softplus() $\rightarrow$ Linear (2$\times$h)}
	\item Emission network: \texttt{Input(h) $\rightarrow$ 2$\times$[Linear(30) $\rightarrow$ Softplus()] $\rightarrow$ Linear (d)}
    \end{itemize}
    \item \textbf{Pymunk Physical Environments}
    \begin{itemize}[noitemsep,nolistsep]
        \item Auxiliary inference network: \texttt{Input(1, 32, 32) $\rightarrow$ ZeroPad2d(padding=[0, 1, 0, 1]) $\rightarrow$ Conv2d(1, 32, kernel\_size=3, stride=2) $\rightarrow$ ReLU() $\rightarrow$ 2$\times$[ZeroPad2d(padding=[0, 1, 0, 1]) $\rightarrow$ Conv2d(32, 32, kernel\_size=3, stride=2) $\rightarrow$ ReLU()] $\rightarrow$ Flatten $\rightarrow$ Linear(64) $\rightarrow$ Linear(2$\times$h)}
	\item Emission network: \texttt{Input(h) $\rightarrow$ Linear(512) $\rightarrow$ 3$\times$[Conv2d(32, 128, kernel\_size=3, stride=1, padding=1) $\rightarrow$ ReLU() $\rightarrow$ PixelShuffle(upscale\_factor=2)] $\rightarrow$  Conv2d(32, 1, kernel\_size=1, stride=1)}
    \end{itemize}
\end{itemize}

To ensure good initial estimation of auxiliary variables, we did not update the underlying SSM parameters for the first 100 and 1000 training steps for the CMU MoCap (walking) and Pymunk physical environments datasets, respectively. In the following, we list specific experiment configurations for individual experiments.

\subsubsection{LatentODE}
We used the RK4 ODE solver to integrate the encoder and drift ODEs with a step size of 0.05 for all datasets.

\begin{itemize}[noitemsep,nolistsep]
\item\textbf{Bouncing Ball}
\begin{itemize}[noitemsep,nolistsep]
	\item Dimension of latent state: 6
	\item Dimension of observations: 1
	\item Encoder network: ODEGRU with a \texttt{GRUCell(hidden\_units=10)} and ODE drift function \texttt{Input(10) $\rightarrow$ Linear(30) $\rightarrow$ Tanh() $\rightarrow$ Linear(10)}
	\item Decoder network: \texttt{Input(6) $\rightarrow$ Linear(10) $\rightarrow$ Softplus() $\rightarrow$ Linear(1)}
	\item ODE drift function: \texttt{Input(6) $\rightarrow$ Linear(64) $\rightarrow$ Softplus() $\rightarrow$ Linear(6)}

\end{itemize}

\item\textbf{Damped Pendulum}
\begin{itemize}[noitemsep,nolistsep]
	\item Dimension of latent state: 6
	\item Dimension of observations: 2
	\item Encoder network: ODEGRU with a \texttt{GRUCell(hidden\_units=10)} and ODE drift function \texttt{Input(10) $\rightarrow$ Linear(64) $\rightarrow$ Tanh() $\rightarrow$ Linear(10)}
	\item Decoder network: \texttt{Input(6) $\rightarrow$ Linear(64) $\rightarrow$ Tanh() $\rightarrow$ Linear(2)}
	\item ODE drift function: \texttt{Input(6) $\rightarrow$ Linear(64) $\rightarrow$ Tanh() $\rightarrow$ Linear(6)}

\end{itemize}

\item\textbf{CMU Motion Capture (Walking)}
\begin{itemize}[noitemsep,nolistsep]
	\item Dimension of latent state: 10
	\item Dimension of observations: 50
	\item Encoder network: ODEGRU with a \texttt{GRUCell(hidden\_units=30)} and ODE drift function \texttt{Input(30) $\rightarrow$ Linear(64) $\rightarrow$ Tanh() $\rightarrow$ Linear(30)}
	\item Decoder network: \texttt{Input(10) $\rightarrow$ 2$\times$[Linear(30) $\rightarrow$ Softplus()] $\rightarrow$ Linear(50)}
	\item ODE drift function: \texttt{Input(10) $\rightarrow$ Linear(30) $\rightarrow$ Softplus() $\rightarrow$ Linear(10)}

\end{itemize}

\item\textbf{Pymunk Physical Environments}
\begin{itemize}[noitemsep,nolistsep]
	\item Dimension of latent state: 10
	\item Dimension of observations: 1024
	\item Encoder network: Same CNN encoder base as in the auxiliary inference network in \ourmodel\ models and ODEGRU with a \texttt{GRUCell(hidden\_units=64)} and ODE drift function  \texttt{Input(64) $\rightarrow$ Linear(64) $\rightarrow$ Tanh() $\rightarrow$ Linear(64)}
	\item Decoder network: Same CNN decoder as in the emission network in \ourmodel\ models
	\item ODE drift function: \texttt{Input(10) $\rightarrow$ Linear(64) $\rightarrow$ Tanh() $\rightarrow$ Linear(10)}

\end{itemize}

\end{itemize}

\subsubsection{LatentSDE}
For LatentSDE experiments, we additionally annealed the KL term in the objective function with a linear annealing schedule from 0 to 1 over 500 steps for all datasets except Pymunk physical environments for which we annealed over 1000 steps. As proposed in \citet{li2020scalable}, we also provided the posterior SDEs with an additional context vector from the encoder to incorporate information from later observations. We used the RK4 ODE solver to integrate the encoder ODEs and the Euler-Maruyama SDE solver to integrate the prior/posterior SDEs with a step size of 0.05 for all datasets.

\begin{itemize}[noitemsep,nolistsep]
    \item\textbf{Bouncing Ball}
\begin{itemize}[noitemsep,nolistsep]
	
	\item Dimension of latent state: 6
	\item Dimension of context vector: 3
	\item Dimension of observations: 1
	\item Encoder network: ODEGRU with a \texttt{GRUCell(hidden\_units=10)} and ODE drift function  \texttt{Input(10) $\rightarrow$ Linear(64) $\rightarrow$ Tanh() $\rightarrow$ Linear(10)}
	\item Decoder network: \texttt{Input(6) $\rightarrow$ Linear(64) $\rightarrow$ Softplus() $\rightarrow$ Linear(1)}
	\item Posterior SDE drift function: \texttt{Input(6+3) $\rightarrow$ Linear(64) $\rightarrow$ Softplus() $\rightarrow$ Linear(6)}
	\item Prior SDE drift function: \texttt{Input(6) $\rightarrow$ Linear(64) $\rightarrow$ Softplus() $\rightarrow$ Linear(6)}
	\item Posterior/Prior SDE diffusion function: \texttt{6$\times$[Input(1) $\rightarrow$ Linear(64) $\rightarrow$ Softplus() $\rightarrow$ Linear(1)]}

\end{itemize}

\item\textbf{Damped Pendulum}
\begin{itemize}[noitemsep,nolistsep]
	
	\item Dimension of latent state: 6
	\item Dimension of context vector: 3
	\item Dimension of observations: 2
	\item Encoder network: ODEGRU with a \texttt{GRUCell(hidden\_units=10)} and ODE drift function  \texttt{Input(10) $\rightarrow$ Linear(64) $\rightarrow$ Tanh() $\rightarrow$ Linear(10)}
	\item Decoder network: \texttt{Input(6) $\rightarrow$ Linear(64) $\rightarrow$ Tanh() $\rightarrow$ Linear(2)}
	\item Posterior SDE drift function: \texttt{Input(6+3) $\rightarrow$ Linear(64) $\rightarrow$ Softplus() $\rightarrow$ Linear(6)}
	\item Prior SDE drift function: \texttt{Input(6) $\rightarrow$ Linear(64) $\rightarrow$ Softplus() $\rightarrow$ Linear(6)}
	\item Posterior/Prior SDE diffusion function: \texttt{6$\times$[Input(1) $\rightarrow$ Linear(64) $\rightarrow$ Softplus() $\rightarrow$ Linear(1)]}

\end{itemize}

\item\textbf{CMU Motion Capture (Walking)}
\begin{itemize}[noitemsep,nolistsep]
	
	\item Dimension of latent state: 10
	\item Dimension of context vector: 3
	\item Dimension of observations: 50
	\item Encoder network: ODEGRU with a \texttt{GRUCell(hidden\_units=30)} and ODE drift function  \texttt{Input(30) $\rightarrow$ Linear(64) $\rightarrow$ Tanh() $\rightarrow$ Linear(30)}
	\item Decoder network: \texttt{Input(10) $\rightarrow$ 2$\times$[Linear(30) $\rightarrow$ Softplus()] $\rightarrow$ Linear(50)}
	\item Posterior SDE drift function: \texttt{Input(10+3) $\rightarrow$ Linear(30) $\rightarrow$ Softplus() $\rightarrow$ Linear(10)}
	\item Prior SDE drift function: \texttt{Input(10) $\rightarrow$ Linear(30) $\rightarrow$ Softplus() $\rightarrow$ Linear(10)}
	\item Posterior/Prior SDE diffusion function: \texttt{10$\times$[Input(1) $\rightarrow$ Linear(30) $\rightarrow$ Softplus() $\rightarrow$ Linear(1)]}

\end{itemize}

\item\textbf{Pymunk Physical Environments}
\begin{itemize}[noitemsep,nolistsep]
	
	\item Dimension of latent state: 10
	\item Dimension of context vector: 4
	\item Dimension of observations: 1024
	\item Encoder network: Same CNN encoder base as in the auxiliary inference network in \ourmodel\ models and ODEGRU with a \texttt{GRUCell(hidden\_units=64)} and ODE drift function  \texttt{Input(64) $\rightarrow$ Linear(64) $\rightarrow$ Tanh() $\rightarrow$ Linear(64)}
	\item Decoder network: Same CNN decoder as in the emission network in \ourmodel\ models
	\item Posterior SDE drift function: \texttt{Input(10+4) $\rightarrow$ Linear(64) $\rightarrow$ Tanh() $\rightarrow$ Linear(10) $\rightarrow$ Tanh()}
	\item Prior SDE drift function: \texttt{Input(10) $\rightarrow$ Linear(64) $\rightarrow$ Tanh() $\rightarrow$ Linear(10) $\rightarrow$ Tanh()}
	\item Posterior/Prior SDE diffusion function: \texttt{10$\times$[Input(1) $\rightarrow$ Linear(64) $\rightarrow$ Softplus() $\rightarrow$ Linear(1)]}

\end{itemize}

\end{itemize}

\subsubsection{\ctssmlti}
\begin{itemize}[noitemsep,nolistsep]
    
\item \textbf{Bouncing Ball}
\begin{itemize}[noitemsep,nolistsep]

	\item Dimension of state ($m$): 6
	\item Dimension of auxiliary variables ($h$): 1
	\item Dimension of observations ($d$): 1

	\item Integrator: Analytic
\end{itemize}

\item \textbf{Damped Pendulum}
\begin{itemize}[noitemsep,nolistsep]

	\item Dimension of state ($m$): 6
	\item Dimension of auxiliary variables ($h$): 2
	\item Dimension of observations ($d$): 2

	\item Integrator: Analytic
\end{itemize}

\item \textbf{CMU Motion Capture (Walking)}
\begin{itemize}[noitemsep,nolistsep]

	\item Dimension of state ($m$): 10
	\item Dimension of auxiliary variables ($h$): 6
	\item Dimension of observations ($d$): 50
	
	\item Integrator: Analytic
\end{itemize}

\item \textbf{USHCN Climate Indicators}
\begin{itemize}[noitemsep,nolistsep]

	\item Dimension of state ($m$): 10
	\item Dimension of auxiliary variables ($h$): 5
	\item Dimension of observations ($d$): 5
	
	\item Integrator: Euler with step size 0.1
\end{itemize}

\item \textbf{Pymunk Physical Environments}
\begin{itemize}[noitemsep,nolistsep]

	\item Dimension of state ($m$): 10
	\item Dimension of auxiliary variables ($h$): 4
	\item Dimension of observations ($d$): 1024
	
	\item Integrator: RK4 with step size 0.05
\end{itemize}
\end{itemize}

\subsubsection{\ctssmnl}
We set the diffusion function to $\rmG(\cdot, t) = \rmI$ for all datasets.

\begin{itemize}[noitemsep,nolistsep]
\item\textbf{Bouncing Ball}
\begin{itemize}[noitemsep,nolistsep]

    \item Dimension of state ($m$): 6
    \item Dimension of auxiliary variables ($h$): 1
    \item Dimension of observations ($d$): 1
    \item Drift function ($\rvf$): \texttt{Input(m) $\rightarrow$ Linear(64) $\rightarrow$ Softplus() $\rightarrow$ Linear(m)}

    \item Integrator: RK4 with step size 0.05
\end{itemize}

\item\textbf{Damped Pendulum}
\begin{itemize}[noitemsep,nolistsep]

    \item Dimension of state ($m$): 6
    \item Dimension of auxiliary variables ($h$): 2
    \item Dimension of observations ($d$): 2
    \item Drift function ($\rvf$): \texttt{Input(m) $\rightarrow$ Linear(64) $\rightarrow$ Softplus() $\rightarrow$ Linear(m)}

    \item Integrator: RK4 with step size 0.05
\end{itemize}

\item\textbf{CMU Motion Capture (Walking)}
\begin{itemize}[noitemsep,nolistsep]

    \item Dimension of state ($m$): 10
    \item Dimension of auxiliary variables ($h$): 6
    \item Dimension of observations ($d$): 50
    \item Drift function ($\rvf$):	\texttt{Input(m) -> SN(Linear(30)) -> Softplus() -> SN(Linear(m))}

    \item Integrator: RK4 with step size 0.05
\end{itemize}

\item\textbf{USHCN Climate Indicators}
\begin{itemize}[noitemsep,nolistsep]

    \item Dimension of state ($m$): 10
    \item Dimension of auxiliary variables ($h$): 5
    \item Dimension of observations ($d$): 5
    \item Drift function ($\rvf$): \texttt{Input(m) $\rightarrow$ SN(Linear(64)) $\rightarrow$ Softplus() $\rightarrow$ SN(Linear(m))}

    \item Integrator: Euler with step size 0.1
\end{itemize}

\item\textbf{Pymunk Physical Environments}
\begin{itemize}[noitemsep,nolistsep]

    \item Dimension of state ($m$): 10
    \item Dimension of auxiliary variables ($h$): 4
    \item Dimension of observations ($d$): 1024
    \item Drift function ($\rvf$): \texttt{Input(m) $\rightarrow$ Linear(64) $\rightarrow$ Tanh() $\rightarrow$ Linear(m) $\rightarrow$ Tanh()}

    \item Integrator: RK4 with step size 0.05
\end{itemize}
\end{itemize}
\subsubsection{\ctssmll}
We set the $\alpha$-network to \texttt{Input(m) $\rightarrow$ Linear(64) $\rightarrow$ Softplus() $\rightarrow$ Linear(K)} for all datasets.
\begin{itemize}[noitemsep,nolistsep]
\item\textbf{Bouncing Ball}
\begin{itemize}[noitemsep,nolistsep]

	\item Dimension of state ($m$): 6
	\item Dimension of auxiliary variables ($h$): 1
	\item Dimension of observations ($d$): 1
	\item Number of base matrices ($K$): 5

	\item Integrator: RK4 with step size 0.05
	
\end{itemize}

\item\textbf{Damped Pendulum}
\begin{itemize}[noitemsep,nolistsep]

	\item Dimension of state ($m$): 6
	\item Dimension of auxiliary variables ($h$): 2
	\item Dimension of observations ($d$): 2
	\item Number of base matrices ($K$): 5

	\item Integrator: RK4 with step size 0.05
	
\end{itemize}

\item\textbf{CMU Motion Capture (Walking)}
\begin{itemize}[noitemsep,nolistsep]

	\item Dimension of state ($m$): 10
	\item Dimension of auxiliary variables ($h$): 6
	\item Dimension of observations ($d$): 50

	\item Integrator: RK4 with step size 0.05
	
\end{itemize}

\item\textbf{USHCN Climate Indicators}
\begin{itemize}[noitemsep,nolistsep]

	\item Dimension of state ($m$): 10
	\item Dimension of auxiliary variables ($h$): 5
	\item Dimension of observations ($d$): 5
	\item Number of base matrices ($K$): 10

	\item Integrator: Euler with step size 0.1
	
\end{itemize}

\item\textbf{Pymunk Physical Environments}
\begin{itemize}[noitemsep,nolistsep]

	\item Dimension of state ($m$): 10
	\item Dimension of auxiliary variables ($h$): 4
	\item Dimension of observations ($d$): 1024
	\item Number of base matrices ($K$): 10

	\item Integrator: RK4 with step size 0.05
	
\end{itemize}
\end{itemize}

\section{Additional Results}
\label{app:additional-results}
Table~\ref{tab:parameter-comparison} shows the number of trainable parameters in each model for different experiments. \ourmodel\ models obtain better performance on every dataset with significantly fewer parameters. Table~\ref{tab:ols-pendulum} shows the goodness-of-fit coefficient ($R^2$) for ordinary least squares regression with the latent states as features, and the ground truth angle and angular velocity as targets. \ctssmnl\ and \ctssmll\ models obtain a high $R^2$ coefficient showing that the latent states learned by these models are informative about the true latent state (angle and angular velocity). 

Figs.~\ref{fig:bb-all-preds} and \ref{fig:pendulum-all-preds} show sample predictions from the \emph{best run} of each model for different missing data settings on the bouncing ball and the damped pendulum datasets, respectively. For the bouncing ball experiment, both LatentODE and LatentSDE learn that the dataset exhibits a zig-zag pattern but are unable to accurately extrapolate it beyond the training context. In the case of damped pendulum, LatentODE and LatentSDE perform well on the low missing data settings (0\% and 30\%) but completely fail on the more challenging settings of 50\% and 80\% missing data. In contrast, \ctssmnl\ and \ctssmll\ generate accurate predictions across datasets and missing data settings. Furthermore, while the predictions shown in Figs.~\ref{fig:bb-all-preds} and \ref{fig:pendulum-all-preds} are from the best performing runs of each model, they represent a typical run for \ctssmnl\ and \ctssmll. On the other hand, the prediction quality from LatentODE and LatentSDE models varies significantly across random initializations.

Fig.~\ref{fig:pymunk-emd} shows the variation of the EMD with time for different models on the box and pong datasets. All models have EMD close to 0 in the context window from 0-2s; however, in the forecast horizon from 2-6s, the EMD rises gradually for \ctssmnl\ and \ctssmll\ but rapidly and irregularly for other models. Figs.~\ref{fig:box-all-preds} and \ref{fig:pong-all-preds} show sample predictions from different models on the box and the pong datasets, respectively. \ctssmnl\ and \ctssmll\ generate accurate predictions whereas LatentODE and LatentSDE perform significantly worse.
\begin{table}[htb]
    \footnotesize
	\centering
    \vspace{-1em}
	\caption{The number of trainable parameters in every model for different experiments.}
    \label{tab:parameter-comparison}
    \begin{tabular}{lrrrrr}
		\toprule
        \multirow{2}{*}{Model} & \multicolumn{5}{c}{Number of Parameters}\\
        \cmidrule{2-6}
         & Bouncing Ball & Damped Pendulum & MoCap Walking (Setup 2) & USHCN & Pymunk Environments\\
        \midrule
        LatentODE & 2094 & 3336 & 15454 & -- & 204243 \\
        LatentSDE & 5461 & 5557 & 17187 & -- & 208043 \\
        GRUODE-B & 32207 & 39884 & -- & -- & -- \\
        \midrule
        \ctssmlti\  & 63 & 72 & 11080 & 185 & 165911 \\
        \ctssmnl\  & 859 & 862 & 11620 & 1439 & 167165 \\
        \ctssmll\  & 974 & 977 & 12509 & 2439 & 168165 \\
        \bottomrule
    \end{tabular}
\end{table}

\begin{table}[htb]
	\footnotesize
	\centering
    \vspace{-1em}
	\caption{Goodness-of-fit coefficient ($R^2$) of ordinary least squares (OLS) regression for the \emph{best run} of each model on the Pendulum dataset. The latent states are treated as features and ground truth angle --- transformed into polar coordinates: $\sin(\text{angle})/\cos(\text{angle})$ --- and angular velocity as targets.}
    \label{tab:ols-pendulum}
	\begin{tabular}{lcccccccc}
		\toprule
        \multirow{2}{*}{Model}   &     \multicolumn{4}{c}{$\sin(\text{angle})/\cos(\text{angle})$ $R^2$  ($\uparrow$) (\% Missing)}  & \multicolumn{4}{c}{Angular Velocity $R^2$ ($\uparrow$) (\% Missing)}\\
              \cmidrule(lr){2-5} \cmidrule(lr){6-9}
		& 0\%  &   30\%  &      50\%  & 80\% & 0\%  &   30\%  &      50\%  & 80\% \\
        \midrule
        LatentODE & 0.000 / 0.802     & 0.000 / 0.735     & 0.000 / 0.744     & 0.000 / 0.626     & 0.001 & 0.000 & 0.000 & 0.000 \\
        LatentSDE & 0.953 / 0.960  & 0.918 / 0.957 & 0.000 / 0.817     & 0.000 / 0.513     & 0.970 & 0.962 & 0.001 & 0.000 \\
        \midrule
        \ctssmlti\ & 0.593 / 0.537 & 0.604 / 0.468 & 0.477 / 0.796 & 0.481 / 0.705 & 0.349 & 0.388 & 0.162 & 0.305 \\
        \ctssmnl\ & 0.984 / \textbf{0.990}  & 0.982 / 0.985 & \textbf{0.973} / 0.976 & \textbf{0.905} / \textbf{0.920}  &\textbf{ 0.986} & 0.969 & 0.935 & \textbf{0.859} \\
        \ctssmll\ & \textbf{0.986} / 0.989 & \textbf{0.983} / \textbf{0.989} & 0.972 /\textbf{ 0.980}  & 0.875 / 0.888 & 0.972 & \textbf{0.978} & \textbf{0.955} & 0.827\\
       
		\bottomrule
	\end{tabular}
\end{table}

\begin{figure}[ht]
    \centering
    \subfloat[0\% Missing]{\includegraphics[width=0.45\linewidth]{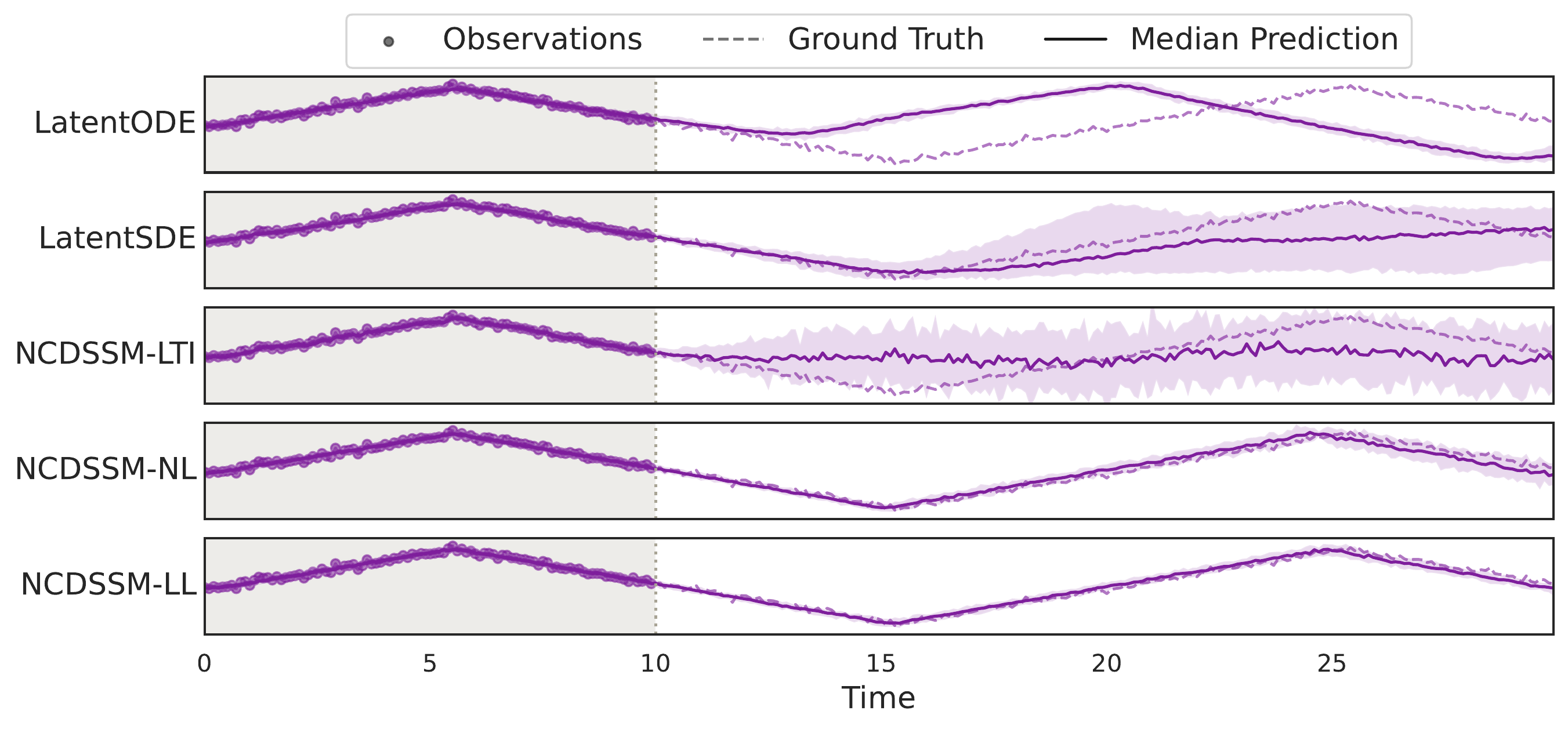}}
    \subfloat[30\% Missing]{\includegraphics[width=0.45\linewidth]{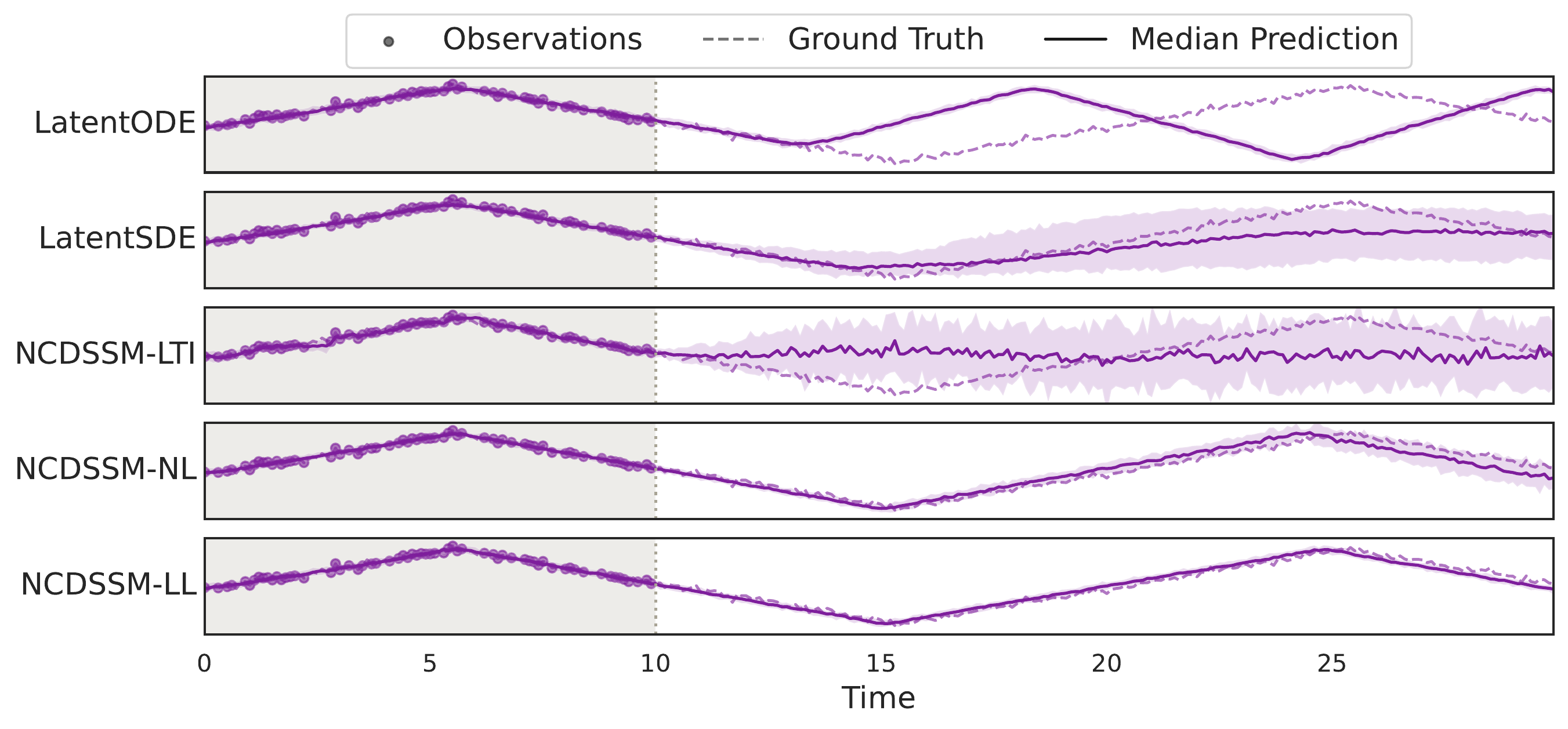}}\\
    \subfloat[50\% Missing]{\includegraphics[width=0.45\linewidth]{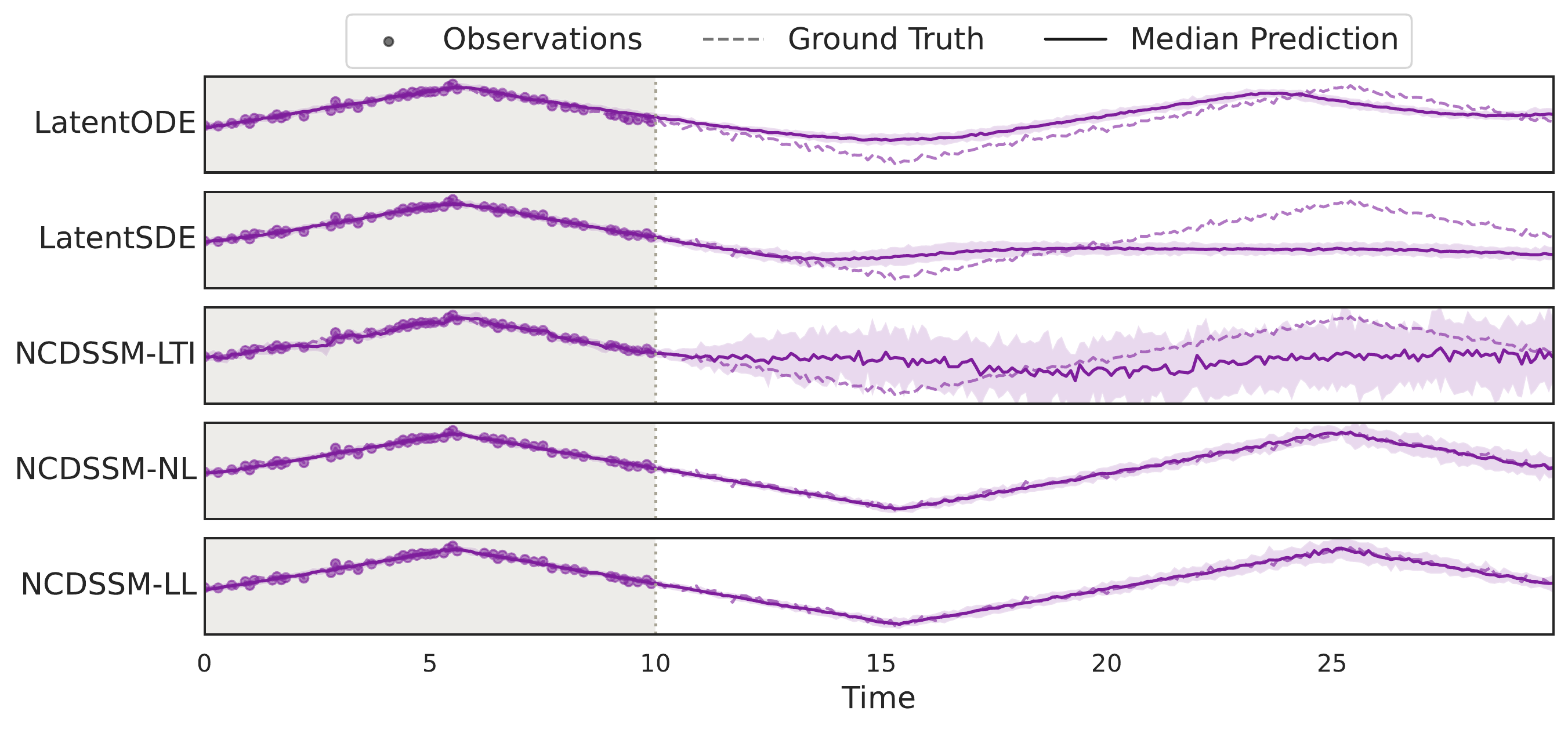}}
    \subfloat[80\% Missing]{\includegraphics[width=0.45\linewidth]{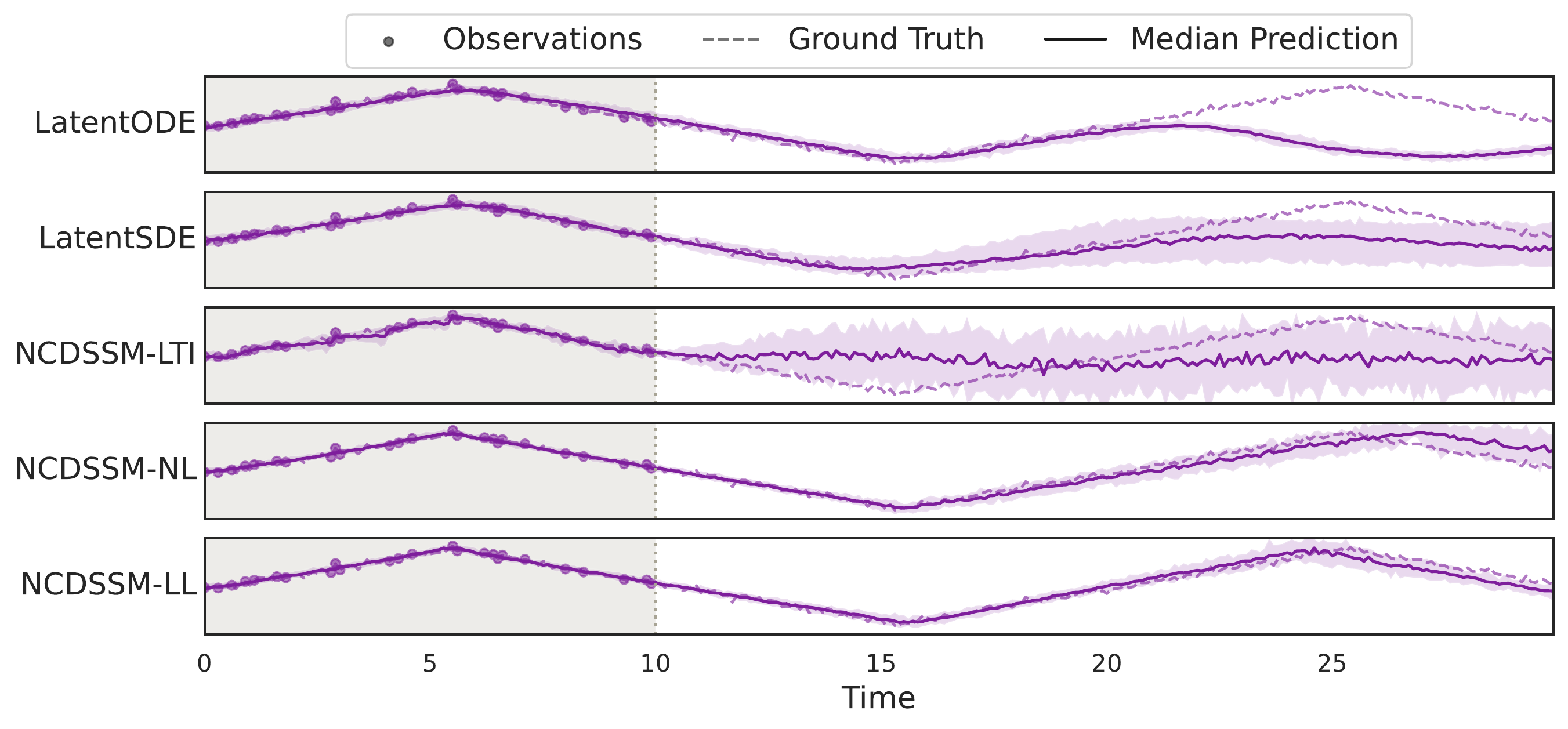}}
    \caption{Predictions from different models on the bouncing ball dataset for the 0\%, 30\%, 50\%, and 80\% missing data settings. The ground truth is shown using dashed lines with observed points in the context window (gray shaded region) shown as filled circles. The vertical dashed gray line marks the beginning of the forecast horizon. Solid lines indicate median predictions with 90\% prediction intervals shaded around them.}
    \label{fig:bb-all-preds}
\end{figure}

\begin{figure}[hb]
    \centering
    \subfloat[0\% Missing]{\includegraphics[width=0.45\linewidth]{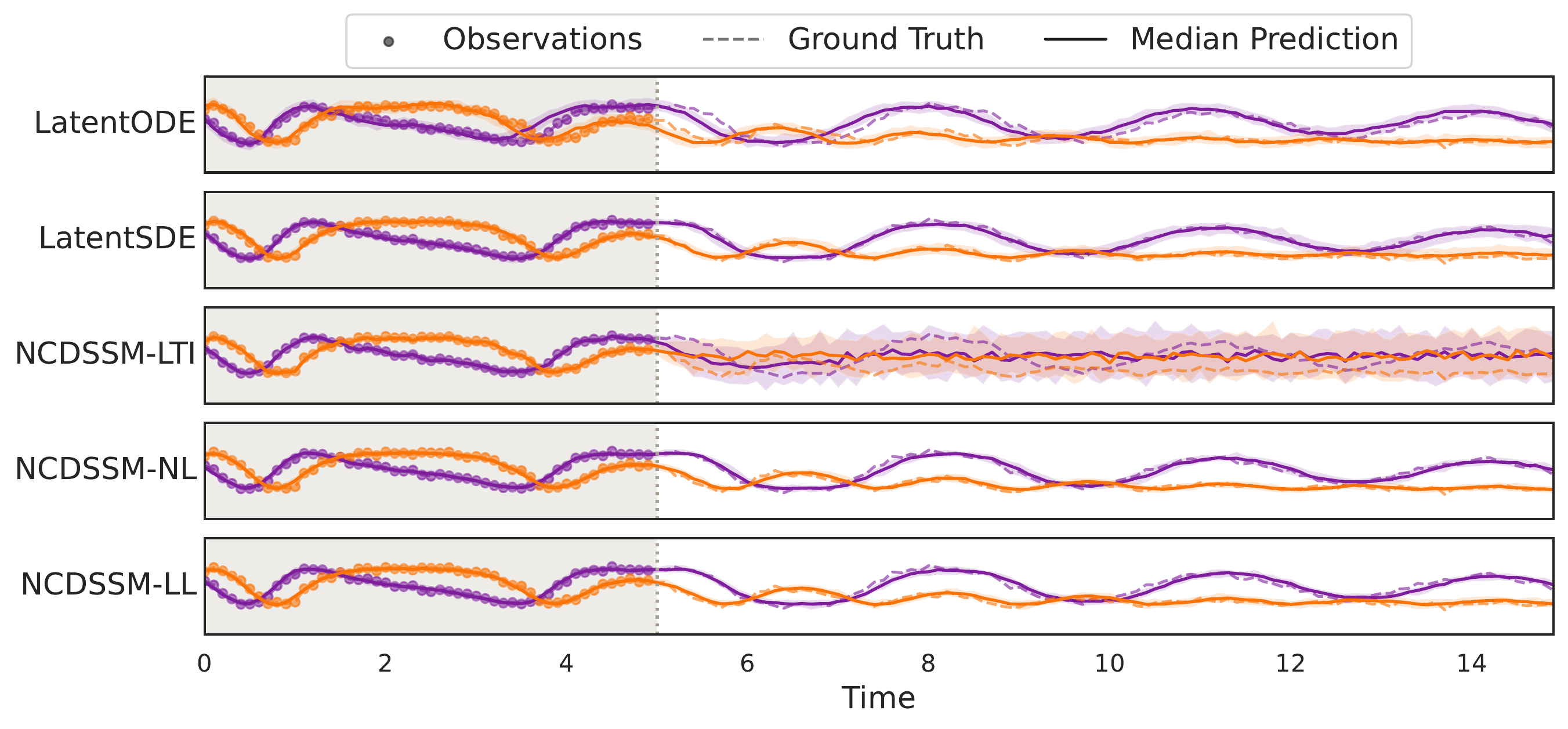}}
    \subfloat[30\% Missing]{\includegraphics[width=0.45\linewidth]{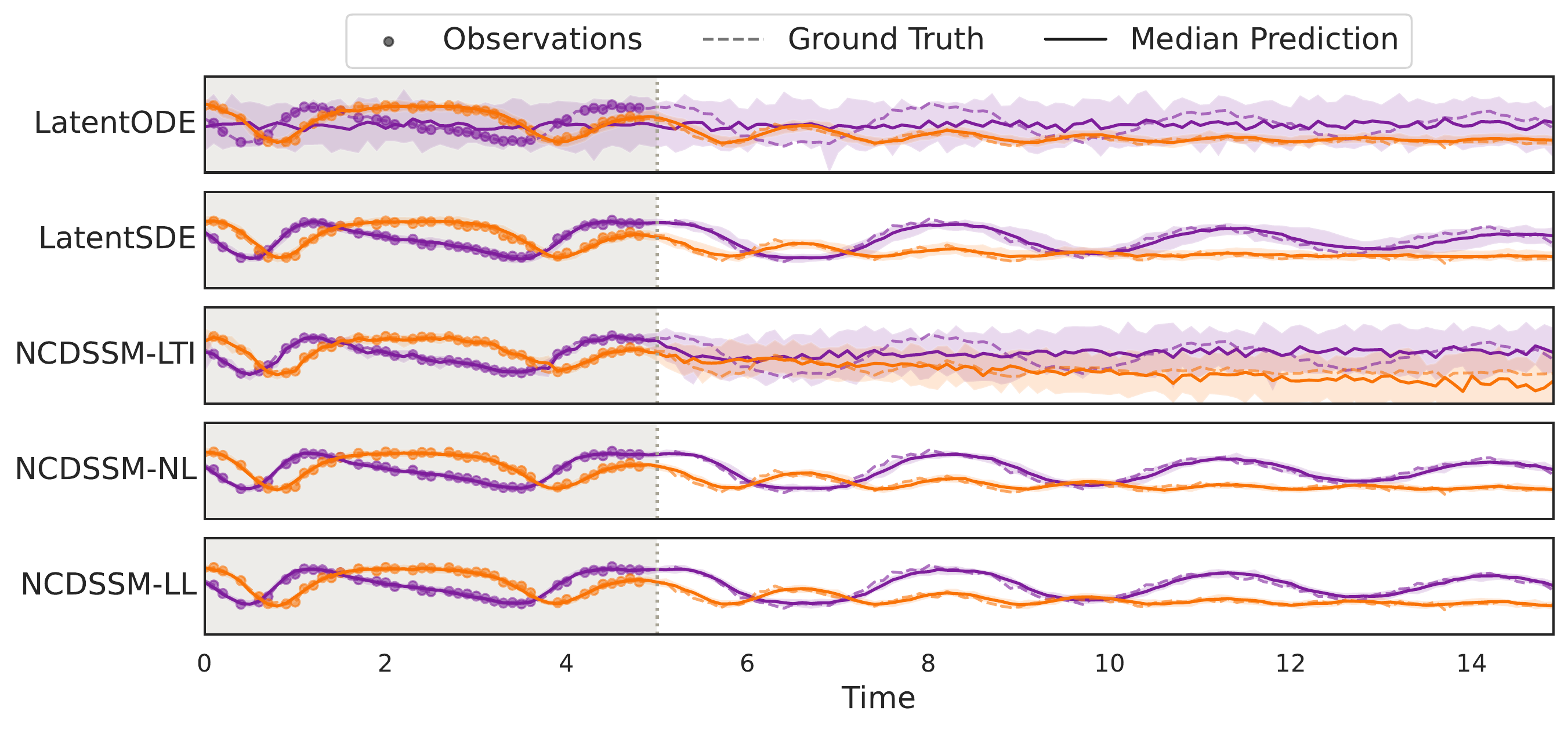}}\\
    \subfloat[50\% Missing]{\includegraphics[width=0.45\linewidth]{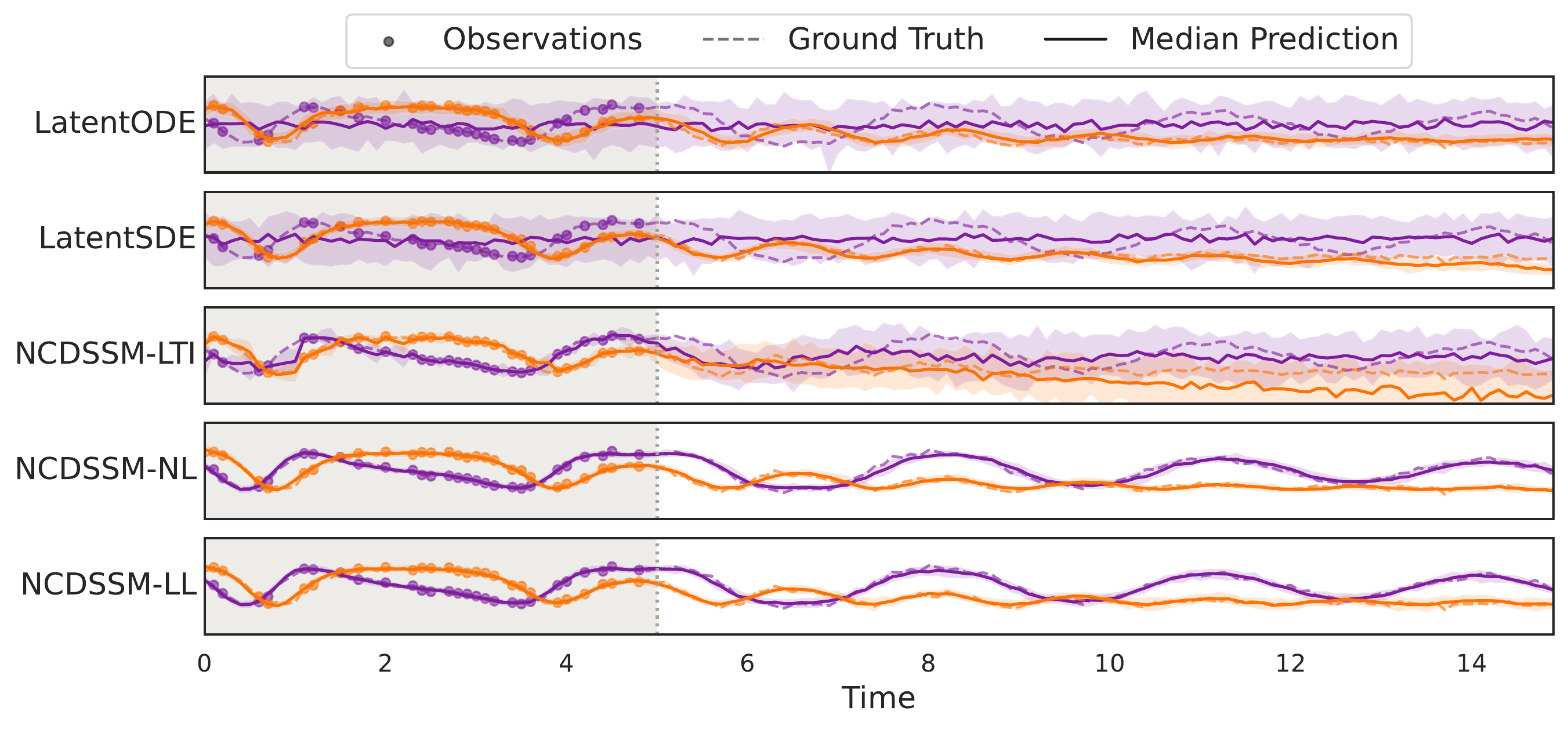}}
    \subfloat[80\% Missing]{\includegraphics[width=0.45\linewidth]{pendulum_all_0.8.pdf}}
    \caption{Predictions from different models on the damped pendulum dataset for the 0\%, 30\%, 50\%, and 80\% missing data settings. The ground truth is shown using dashed lines with observed points in the context window (gray shaded region) shown as filled circles. The vertical dashed gray line marks the beginning of the forecast horizon. Solid lines indicate median predictions with 90\% prediction intervals shaded around them. The purple and orange colors indicate observation dimensions.}
    \label{fig:pendulum-all-preds}
\end{figure}

\begin{figure}[ht]
    \centering
    \subfloat{\includegraphics[width=0.5\linewidth]{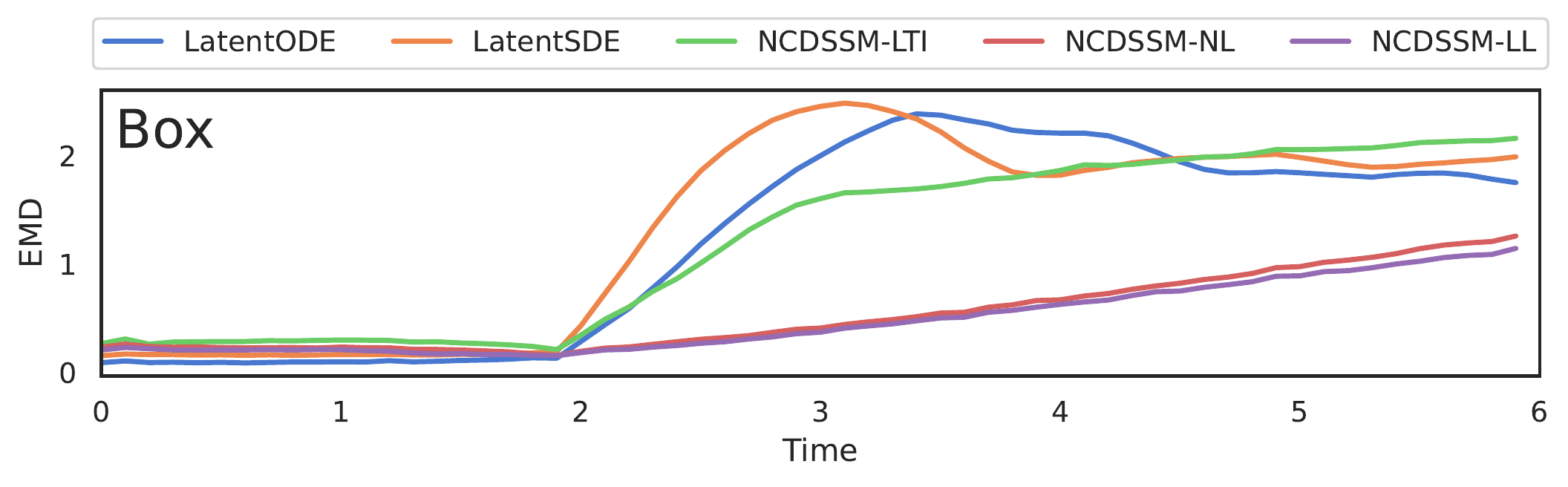}}
    \subfloat{\includegraphics[width=0.5\linewidth]{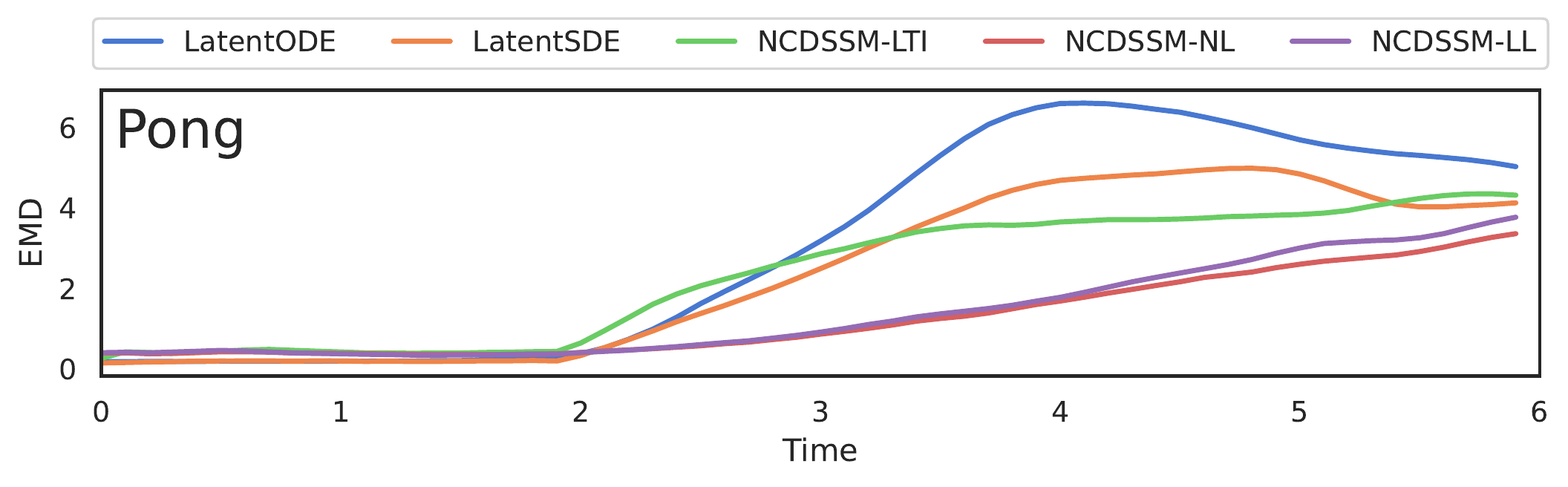}}
    \caption{Variation of EMD over time for the Box (left) and Pong (right) datasets. The EMD rises gradually with time for \ctssmll\ and \ctssmnl\ but rapidly and irregularly for other models.}
    \label{fig:pymunk-emd}
\end{figure}

\begin{figure*}[hb]
    \centering
    \subfloat[Box LatentODE]{\includegraphics[width=0.9\linewidth]{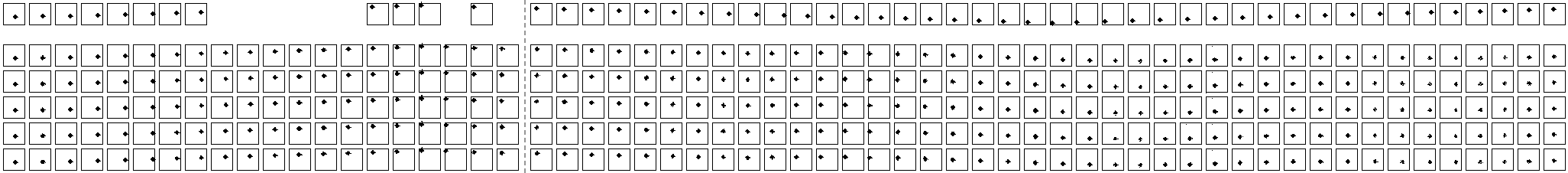}}\\
    \subfloat[Box LatentSDE]{\includegraphics[width=0.9\linewidth]{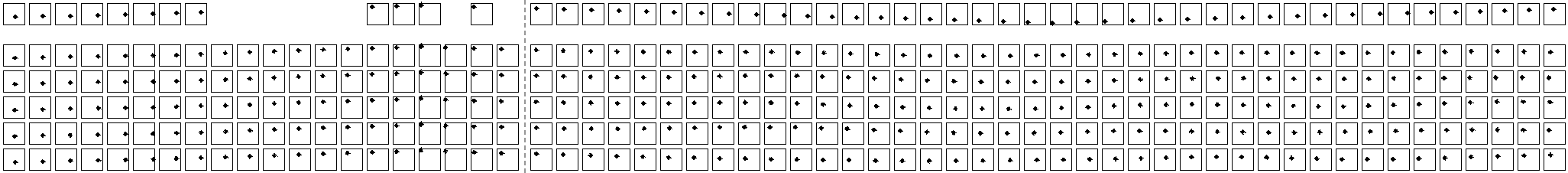}}\\
    \subfloat[Box \ctssmlti]{\includegraphics[width=0.9\linewidth]{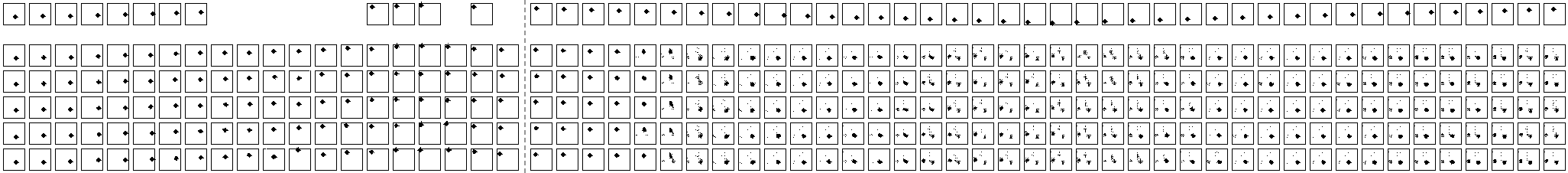}}\\
    \subfloat[Box \ctssmnl]{\includegraphics[width=0.9\linewidth]{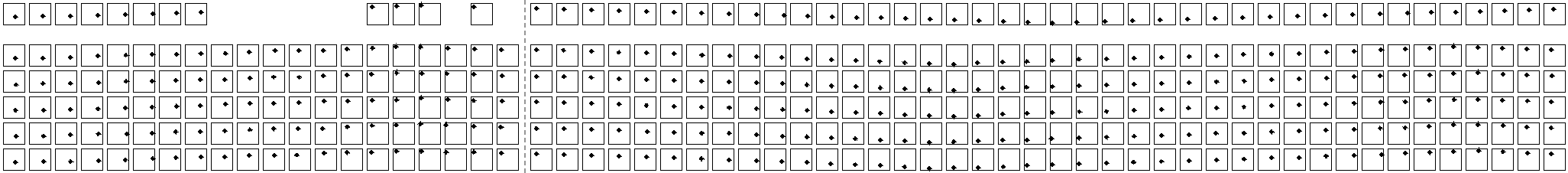}}\\
    \subfloat[Box \ctssmll]{\includegraphics[width=0.9\linewidth]{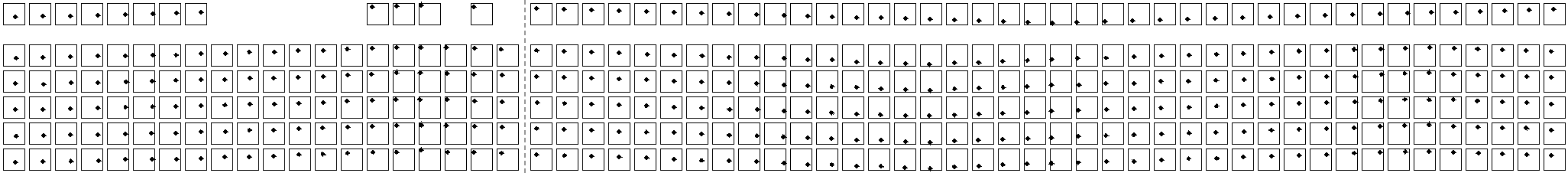}}
    \caption{Sample predictions from different models on the Box dataset. The top row in each figure is the ground truth with some missing observations in the context window (before the dashed grey line). The next five rows show trajectories sampled from each model. Best viewed zoomed-in on a computer.}
    \label{fig:box-all-preds}
\end{figure*}

\begin{figure*}[htb]
    \centering
    \subfloat[Pong LatentODE]{\includegraphics[width=0.9\linewidth]{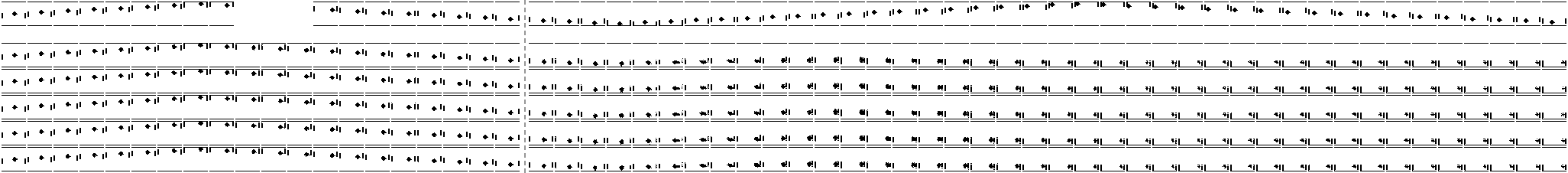}}\\
    \subfloat[Pong LatentSDE]{\includegraphics[width=0.9\linewidth]{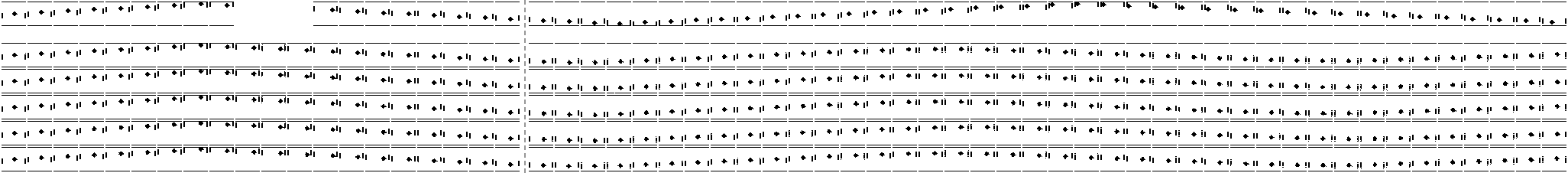}}\\
    \subfloat[Pong \ctssmlti]{\includegraphics[width=0.9\linewidth]{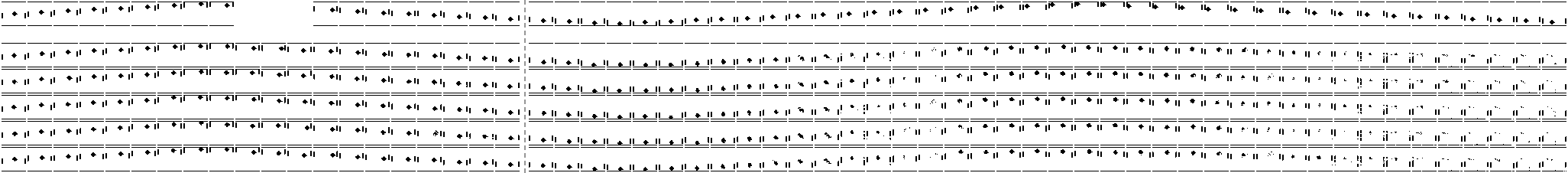}}\\
    \subfloat[Pong \ctssmnl]{\includegraphics[width=0.9\linewidth]{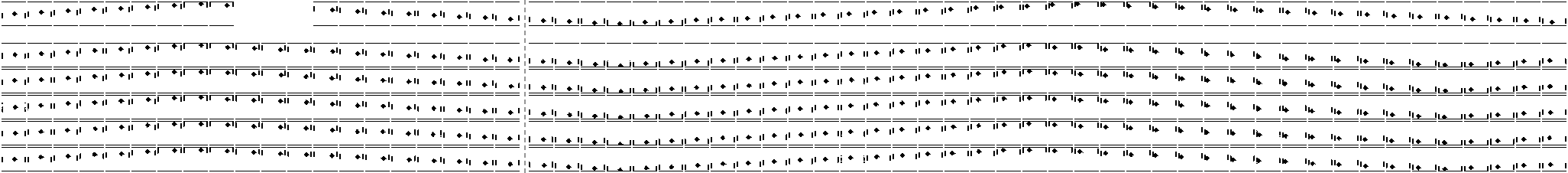}}\\
    \subfloat[Pong \ctssmll]{\includegraphics[width=0.9\linewidth]{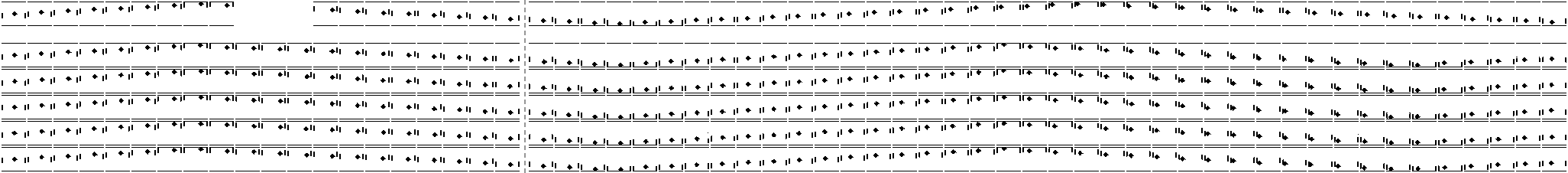}}
    \caption{Sample predictions from different models on the Pong dataset. The top row in each figure is the ground truth with some missing observations in the context window (before the dashed grey line). The next five rows show trajectories sampled from each model. Best viewed zoomed-in on a computer.}
    \label{fig:pong-all-preds}
\end{figure*}

\end{document}